\newcommand\norm[1]{\left\lVert#1\right\rVert}
\newtheorem{theorem}{Theorem}[section]
\newtheorem{remark}{Remark}[section]
\newtheorem{definition}{Definition}[section]
\title{Cooperative Collision Avoidance in Mobile Robots using Dynamic Vortex Potential Fields}
\author{
  Wayne Paul Martis \\
  Technische Universit\"{a}t Dortmund \\
  Dortmund, Germany\\
  \texttt{waynepaul.martis@tu-dortmund.de} \\
   \And
  Sachit Rao \\
  International Institute of Information Technology Bangalore \\
  Bangalore, India\\
  \texttt{sachit@iiitb.ac.in}
}
\begin{document}
\maketitle

\begin{abstract}
In this paper, the collision avoidance problem for non-holonomic robots moving at constant linear speeds in the 2-D plane is considered. The maneuvers to avoid collisions are designed using dynamic vortex potential fields (PFs) and their negative gradients; this formulation leads to a reciprocal behaviour between the robots, denoted as being cooperative. The repulsive field is selected as a function of the velocity and position of a robot relative to another and introducing vorticity in its definition guarantees the absence of local minima. Such a repulsive field is activated by a robot only when it is on a collision path with other mobile robots or stationary obstacles. By analysing the kinematics-based engagement dynamics in polar coordinates, it is shown that a cooperative robot is able to avoid collisions with non-cooperating robots, such as stationary and constant velocity robots, as well as those actively seeking to collide with it. Conditions on the PF parameters are identified that ensure collision avoidance for all cases. Experimental results acquired using a mobile robot platform support the theoretical contributions. 
\end{abstract}

\keywords{Vortex Potential Fields \and Mobile Robots \and Collision Avoidance \and Reciprocity}

\section{Introduction}

Collision avoidance (CA) is a crucial element in many applications involving mobile robots as well as aerial vehicles. For example, in the deployment of a swarm of robots for area exploration--see \cite{Gayle2009,yan2013survey,McIntyre2016} for applications and challenges--or air traffic management, where several aerial robots sharing the same airspace have to reach their respective goal locations while avoiding collisions with each other, \cite{Kosecka1997,Du2019}. A commonly applied technique to avoid collisions in such dynamic environments is based on artificial potential fields (APFs), mainly because this technique uses local information and is easy to implement. As is well known, in this approach, an attractive field is defined at the goal location and repulsive fields are defined around obstacles. These fields are typically functions of positions and/or the velocities of the robots and obstacles. Calculating the negative gradient of these PF functions leads to expressions for the inputs to the robot that can steer it towards its goal.

The choice of PFs, however, can lead to the presence of local minima, where, a robot gets ``trapped'' and is unable to move towards its goal. Variants of PF functions have been proposed that avoid this issue, one of them being the class of harmonic PFs, denoted by $\mathcal{U}$, that satisfy the Laplacian equation $\nabla^T\nabla \mathcal{U}$=0, \cite{Guldner1997}. Alternative PFs have been proposed in \cite{Urakubo2018,Du2019,Keyu2020}.

Another variant is the use of the so-called curl-free vector field, also the vortex field, as the repulsive field defined around the obstacles rotates around the obstacle. The direction of rotation can be selected by the appropriate choice of signs of the gradients. Such fields have been used in \cite{DeLuca1994,Kosecka1997,Shibata2014,McIntyre2016,Choi2020} to design repulsive PFs for collision avoidance. In our work, where we focus on cooperative robots, we select such a rotating repulsive field to ensure that a pair of robots on a collision path turn in the same direction and avoid collisions; this formulation holds even for multiple robots. As described in \cite{Kosecka1997}, fixing the direction of rotation of the vortex repulsive field corresponds to fixing the ``rule of the road''. Repulsive PFs akin to the vortex PF we use are designed in \cite{Garcia-Delgado2015}, where the force vectors are chosen to lie tangentially to an obstacle using a rotational matrix, and in \cite{Fedele2018}, where a helicoidal PF function is proposed around an obstacle that enables a robot to rotate around it. It is emphasised these designs do not explicitly consider the relative position or the velocity.

We use the dynamic vortex field function proposed in \cite{Park2008} to design the repulsive field, where, the field is a function of the position and velocity of a robot \textit{relative} to another robot. As the repulsive field is defined in terms of relative velocity and position, the direction of rotation of the field becomes fixed, which in turn, ensures that robots turn in the same direction. It should be mentioned that the dynamic vortex field in \cite{Park2008} is designed for a robotic manipulator to avoid dynamic obstacles, which we apply to mobile robots, thus bringing in the novelty to our approach.

Indeed, the idea of requiring cooperative robots to turn in the same direction is the essence of the Hybrid Reciprocal Velocity Obstacle (HRVO) algorithm, \cite{Snape2011}, as well as the Optimal Reciprocal Collision Avoidance (ORCA) algorithm, \cite{vandenBerg2011,Alonso-Mora2013}. In these algorithm, cooperative robots perform maneuvers that are reciprocal to each other to avoid collisions. The maneuvers are designed in the velocity space of the robots; for example, in the HRVO algorithm, a robot computes a new velocity that lies outside the velocity obstacle induced by other robots and which is also closest to its preferred velocity. In contrast to these algorithms, what we propose does not require the computation of velocity obstacles and reciprocity appears as a natural consequence of the formulation of the repulsive field. We also consider non-point robots, modelled as circles, and determine conditions on the PF parameters as functions of the robots' radii, initial separation distance, as well as their relative velocities that guarantee collision avoidance. We explicitly consider a case where a robot is actively seeking to collide, denoted as the attacking robot, with a cooperative robot and even for this case, bounds on the PF parameters that enable the cooperative robot to repel the attacking robot are identified. The attacking robot is assumed to apply the standard attractive PF to collide with the cooperative robot. 

The main contributions of this paper are the application of a dynamic vortex potential field for collision avoidance in cooperative robots
\begin{enumerate}
    \item that naturally introduces reciprocity between cooperative robots, thus eliminating the need to state rules that define the directions of turn;
    \item against non-cooperative robots that are either stationary or move at a constant velocity;
		\item that can be applied to repel robots that actively seek to collide with cooperative robots; and
		\item that consider the radius of the circle that bounds each cooperative robot.
\end{enumerate}
It is also highlighted that the paper presents rigorous proofs of collision avoidance for all cases, using Lyapunov theory and results from aerospace guidance literature. The engagement between robots, in all cases, is defined using kinematics-based dynamics in polar coordinates. The proofs are based on the conditions--identified in \cite{Chakravarthy1998}--that are both necessary and sufficient for collision of points moving in 2-D space. Collision avoidance is demonstrated by proving that the collision conditions form unstable equilibrium points and hence, are \emph{never} satisfied. It is assumed that all types of robots are homogeneous and move at the same linear speed. However, since the algorithm is developed in the relative velocity space, it is applicable to robots that move at different linear speeds.

We present experimental results for all our theoretical contributions in an indoor robotics platform that consists of several differential-drive robots that satisfy the non-holonomic constraints. As will be discussed, the theoretical results can be applied directly with minor tuning of the PF parameters, primarily to account for the inertial parameters of the robots that are not considered in the theoretical derivations. 

The paper is organised as follows: The kinematic model of the robot as well as the engagement dynamics between two robots on a collision path are presented in Sec.~\ref{Sec:Prelim}. The attractive and dynamic vortex repulsive fields that are used to design the robot maneuvers are described in Sec.~\ref{Sec:MainRes}; the proofs of collision avoidance are also presented here. Experimental results on the application of the proposed approach on differential drive robots are presented in Sec.~\ref{Sec:ExpRes}, followed by concluding remarks in Sec.~\ref{Sec:Conc}.

\section{Preliminaries}\label{Sec:Prelim}

The kinematics of the robot as well as that of the engagement dynamics defined in terms of relative velocities in polar coordinates are described. These relative velocities will be used to prove collision avoidance.

\subsection{Robot description}

A swarm of $N\geq 2$ homogeneous robots that move at constant linear speeds, $V$, is considered. Each robot is assumed to be a point-mass and non-holonomic, whose movements in a Cartesian plane ($X-Y$) are given by the kinematic relations
\begin{subequations}\label{RobotDyn}
\begin{align}
    \dot{x}_{Ri} &= V\cos{\left(\phi_{Ri}\right)}, \ \dot{y}_{Ri} = V\sin{\left(\phi_{Ri}\right)},\label{xyDyn}\\
    \dot{\phi}_{Ri} &= \omega_{Ri} \label{phiDyn} 
\end{align}
\end{subequations}
where, $(x,y)_{Ri}$ are the coordinates of robots $R_i, \ i=1,\cdots,N$, in the Cartesian space with a known origin; $\phi_{Ri}$ is the heading angle, which is measured positive counter-clockwise about the $X-$axis; and $\omega_{Ri}$ is the angular velocity, which can be varied. This variation leads to the accelerations
\begin{subequations}\label{RobotDynAccl}
\begin{align}
    \ddot{x}_{Ri} &= -V\sin{\left(\phi_{Ri}\right)}\dot{\phi}_{Ri}=F_{xRi}\label{FxDef} \\
    \ddot{y}_{Ri} &= V\cos{\left(\phi_{Ri}\right)}\dot{\phi}_{Ri}=F_{yRi},\label{FyDef}
\end{align}
\end{subequations}
where, the terms $F_{xRi},F_{yRi}$ can be interpreted as forces acting on the robot that steer the robot to known goal locations $(x,y)_{RiG}$. It is these forces that are selected as the negative gradients of the PFs defined in Sec.~\ref{Sec:MainRes}.

\subsection{Engagement Dynamics}\label{Sec:EngDyn}

\begin{figure}[hbt!]
	\centering
	\resizebox{0.4\columnwidth}{!}{		
		\begin{tikzpicture}
		
			
			\node[fill=gray!30,thick,circle,inner sep=0pt, minimum size = 0.5cm] (R1) at (0, 0) {$R_1$};
			\node[fill=gray!30,thick,circle,inner sep=0pt, minimum size = 0.5cm] (R2) at (2, 2) {$R_2$};
			
			\draw[thick](R1) -- (R2) node[pos=0.7,below] {$r$}; 
			
			\draw[->,dashed] (R1) -- (1.75,0) node[pos=1,below] {$X$}; 
			\draw[->,dashed] (R1) -- (0,1.25) node[pos=0.8,left] {$Y$};
			
			\draw[->,thick] (0.5,0) arc (0:45:0.5cm); 
			\draw (1.15,0.2) node[left] {$\theta_{1}$}; 
			
			\draw[->,dashed] (1.25,2) -- (3.25,2);
			\draw[->,thick] (2.5,2) arc (0:225:0.5cm); 
			\draw (1.58,1.7) node[left] {$\theta_{2}$}; 
			
			\draw[->,thick](R1) -- (0.75,1.29) node[pos=0.7,left] {$V$}; 
			\draw[->,thick](R2) -- (0.71,2.75) node[pos=0.5,below left] {$V$}; 
			
			\draw[->,thick] (1.1,0) arc (0:60:1.1cm); 
			\draw (1.15,0.2) node[right] {$\phi_{R1}$}; 
			
			\draw[->,thick] (3,2) arc (0:150:1cm); 
			\draw (3.05,2.2) node[right] {$\phi_{R2}$}; 
			
		\end{tikzpicture}
	}
	\caption{A pair of robots on a collision path} 
	\label{fig:2RobClsSchematic}
\end{figure}
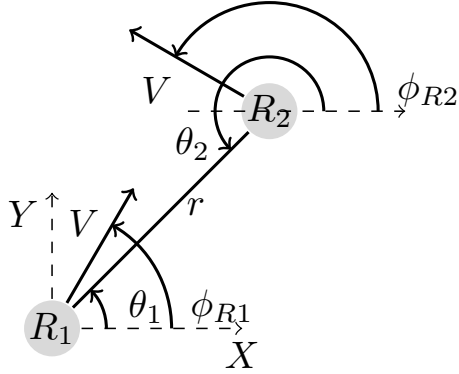

Consider 2 robots, $R_1$ and $R_2$, moving in the plane at constant linear speeds, as shown in Fig.~\ref{fig:2RobClsSchematic}. Following the notations used in aerospace guidance literature, denote the separation distance between them as
\begin{equation}\label{rDef}
    r_{1}=\sqrt{\left(x_{R1}-x_{R2}\right)^2 + \left(y_{R1}-y_{R2}\right)^2} = r_2,
\end{equation} 
and the line-of-sight (LOS) angles as $\theta_{i}$, which are computed as
\begin{equation}\label{LOSAngDef}
    \theta_{i}=\tan^{-1}{\left(\frac{y_{Rj}-y_{Ri}}{x_{Rj}-x_{Ri}}\right)}, \ i,j=1,2, \ i\neq j,
\end{equation}
and are the angles made by the line joining the two robots and the $X-$axis passing through the robots, respectively.

Using \eqref{RobotDyn}, \eqref{rDef}, and \eqref{LOSAngDef}, define the relative velocities in polar coordinates
\begin{subequations}\label{VrVthExp}
	\begin{align}
		\dot{r}_i&= V_{ri} =  V\cos{\left(\phi_{Rj}-\theta_{i}\right)} - V\cos{\left(\phi_{Ri}-\theta_{i}\right)} \label{VriExp} \\
		r_i\dot{\theta}_{i} &= V_{\theta i} = V\sin{\left(\phi_{Rj}-\theta_{i}\right)} - V\sin{\left(\phi_{Ri}-\theta_{i}\right)}, \label{VthiExp}
	\end{align}
\end{subequations}
which, along with \eqref{RobotDynAccl}, yield the relative accelerations
\begin{subequations}\label{VrVthDotExp}
	\begin{align}
		\dot{V}_{ri} &= \frac{V_{\theta i}^2}{r_i} +  \cos{\left(\theta_{i}\right)}\left(F_{xRj}-F_{xRi}\right) + \sin{\left(\theta_{i}\right)}\left(F_{yRj}-F_{yRi}\right) \label{VriDotExp} \\
		\dot{V}_{\theta i} &= \frac{-V_{ri}V_{\theta i}}{r_i} -  \sin{\left(\theta_{i}\right)}\left(F_{xRj}-F_{xRi}\right) + \cos{\left(\theta_{i}\right)}\left(F_{yRj}-F_{yRi}\right).\label{VthiDotExp}
	\end{align}
\end{subequations}
Since the LOS angles satisfy $\theta_{2}=\pi+\theta_{1}$, the relative velocities also satisfy $V_{r1}=V_{r2}$ and $V_{\theta 1}=V_{\theta 2}$. Thus, when a pair of robots are considered, the subscript $i$ is dropped for the relative velocity and acceleration terms.

As proved in \cite[Lemma 2]{Chakravarthy1998}, the robots $R_1$ and $R_2$ moving at a constant velocity are on a collision path if $V_{r}<0$ \textit{and} $V_{\theta}=0$; these are both necessary and sufficient conditions for collision. Thus, if the forces $F_{xRi},F_{yRi}$ acting on each robot are selected such that these conditions are \emph{violated}, then, the robots avoid a collision with each other. These forces are chosen based on the negative gradients of the dynamic vortex potential field--which forms the repulsive field around each robot considered as an obstacle--and an attractive field defined at the goal location; these designs form the main results of the paper, which are presented next.

\section{Main Results}\label{Sec:MainRes}

To design the inputs that steer each robot to its respective goal location and avoid collisions, the following standard assumptions are made:
\begin{enumerate}[label=A.\arabic*]
	\item The goal locations, $(x,y)_{RiG}$, of each robot are stationary and known.
	\item Each robot possesses information on the positions and velocities of other robots in its vicinity.
\end{enumerate}


\subsection{Attractive Potential Field}\label{Sec:AttPF}

Since each robot, $R_i$ knows its goal location, the inputs that steer the robot towards its goal are designed based on the attractive PF function
\begin{equation}\label{UAttPF}
    \mathcal{U}_{\text{Att}i} = \kappa\sqrt{\left(x_{Ri}-x_{RiG}\right)^2 + \left(y_{Ri}-y_{RiG}\right)^2},
\end{equation}
where, $\kappa>0$ is a user-defined parameter. As can be seen, $\mathcal{U}_{\text{Att}i}$ is simply a scaled Euclidean distance between the current location of the robot and the goal. Now, it is shown that if the robot inputs are chosen based on the negative gradient of $\mathcal{U}_{\text{Att}i}$, then, it is attracted towards the goal. The proof is based on constructing a Lyapunov function and analysing its dynamics. 

For simplicity, consider a single robot and an obstacle-free environment (the subscript $i$ is therefore dropped). Thus, using the terms introduced in Sec.~\ref{Sec:EngDyn}, the forces generated by the attractive field become
\begin{equation}\label{FxyAtt}
    F_{xR\text{Att}} = \frac{-\partial \mathcal{U}_{\text{Att}}}{\partial\left(x_{R}-x_{RG}\right)}=+\kappa \cos{\left(\theta\right)}, \ F_{yR\text{Att}} = \frac{-\partial \mathcal{U}_{\text{Att}}}{\partial\left(y_{R}-y_{RG}\right)}=+\kappa \sin{\left(\theta\right)}.
\end{equation}
In deriving these partial derivatives, the expressions for the LOS angle
\begin{equation}\label{LOSAngExp}
	\cos{\left(\theta\right)} = \frac{x_{RG}-x_{R}}{r}, \ \sin{\left(\theta\right)} = \frac{y_{RG}-y_{R}}{r} 
\end{equation}
are used, where, $r$ is the separation distance between the robot and the goal. Substituting \eqref{FxyAtt} in \eqref{VrVthDotExp}, the relative accelerations become
\begin{equation}\label{VrVthDotAtt}
	\dot{V}_{r} = \frac{V_{\theta}^2}{r} - \kappa, \ \dot{V}_{\theta} = \frac{-V_{r}V_{\theta}}{r}.     
\end{equation}
Also, since the goal location, or target, is stationary, the relative velocities, $V_{r},V_{\theta}$, in \eqref{VrVthExp}, reduce to $V_r = - V\cos{\left(\phi_{R}-\theta\right)}, \ V_{\theta}=- V\sin{\left(\phi_{R}-\theta\right)}$, 
from which it emerges that
\begin{equation}\label{VrVthCirc}
    V_r^2+V_{\theta}^2=V^2.
\end{equation}
This condition implies that in the $V_{r}-V_{\theta}$ space, the relative velocities are constrained to lie on a circle with the centre at the origin and radius $V$.

The robot is attracted towards the target if the collision conditions $V_r<0$ and $V_{\theta}=0$ are satisfied. Since these states have to satisfy \eqref{VrVthCirc}, for collision to occur, the relative velocity $V_r$ has to satisfy $V_r=-V$. Also note that, since $|V_r|\leq V$, even if $\dot{V}_{r} =- \kappa$ for $V_{\theta}=0$ and $r \neq 0$, in the dynamics \eqref{VrVthDotAtt}, the relative velocity satisfies $V_{r}=-V$. Thus, to examine the occurrence of collision, by defining $\tilde{V}_r=V_r+V$, the stability of the equilibrium point $r,V_{\theta},\tilde{V}_r=0$ is analysed, following Lyapunov stability concepts, as discussed in \cite{Khalil2002}. Consider the Lyapunov function
\begin{equation}\label{LyapFnAtt}
    \mathcal{V}_{LAtt}=\kappa r + \frac{1}{2}V_{\theta}^2 + \frac{1}{2}\tilde{V}_r^2,
\end{equation}
which satisfies $\mathcal{V}_{LAtt}=0$ at the equilibrium point. By expressing the dynamics \eqref{VrVthDotAtt} in terms of $\tilde{V}_r$, the derivative of $\mathcal{V}_{LAtt}$ calculated using these dynamics is given by
\begin{equation}\label{LyapFnAttDot}
	\dot{\mathcal{V}}_{LAtt} = -V\left(\kappa - \frac{V_{\theta}^2}{r}\right).
\end{equation}
Now, since $|V_{\theta}|\leq V$, by choosing $\kappa>(V^2/r^*)$, where, $r^*>0$, the derivative $\dot{\mathcal{V}}_{LAtt}<0 \ \forall \ r>r^*$ and $V_{\theta}\neq0$. Thus, the trajectories of the non-linear dynamics are attracted towards $r=r^*$ if the initial distance $r_0>r^*$. Note that, from \eqref{FxyAtt}, $\kappa$ is a measure of the acceleration that can be provided by the robot. 

While the result \eqref{LyapFnAttDot} suggests a limit cycle type behaviour in the vicinity of $r^*$, the analysis is further extended by determining if there exist initial conditions and a time instant, say $t=t_1>0$, when $V_{\theta}(t_1)=0$, $V_r(t_1)=-V \Rightarrow \tilde{V}_r=0$, and $r(t_1)\neq0$ hold. If such conditions can be identified, then, the robot reaches the equilibrium point within a finite time interval, say $t_1<t_2<\infty$, since $\dot{\mathcal{V}}_{LAtt}=-V\kappa<0 \ \forall \ t\geq t_1$. In addition, $\forall \ t\geq t_2$, though $V_r=-V$ and $\dot{V}_{r} =- \kappa$ hold, since $|V_r|\leq V$, the magnitude of $V_r$ cannot increase any further, implies that these conditions lead to the robot being ``trapped'' at the target location.

To determine if the conditions mentioned can be found, consider the dynamics of $V_r$, in \eqref{VrVthDotAtt}, and its simplification $\dot{V}_{r}=-\beta \kappa$, where, $\beta<1$. Note that if $r_0<r^*$ and $V_{\theta}(t=0)\neq0$, then $\beta<0$ as well; however, from \eqref{LyapFnAttDot}, since these conditions lead to $\dot{\mathcal{V}}_{LAtt}>0$, the robot moves away from the target resulting in $r>r^*$. Thus, the analysis can be performed by assuming an initial separation distance $r_0>r^*$ and $0<\beta<1$. Now, by integrating $\dot{V}_{r}=-\beta \kappa$ once, with the initial condition $V_{r0}$, leads to $V_{r}(t_1)=V_{r0}-kt_1$ and the value of $t_1$ when $V_{r}(t_1)=-V$ and $V_{\theta}(t_1)=0$, to be $t_1=\dfrac{V_{r0}+V}{\beta \kappa}$. Integrating the dynamics of $\dot{r}=V_r$, results in
\begin{equation}\label{rt1Val}
	r(t_1) = r_0 + \frac{\left(V_{r0}-V\right)^2}{2\beta \kappa}\Rightarrow r(t_1)\geq r_0>r^*.
\end{equation}
Thus, for $t\geq t_1$, since $\dot{\mathcal{V}}<0$ and the conditions for collision are satisfied, the robot reaches its target location for any initial conditions $V_[r0],V_{\theta0}$, and $r_0$, implying that the equilibrium point $r,V_{\theta},\tilde{V}_r=0$ is globally asymptotically stable.

The existence of $V_{\theta}(t_1)=0$, $V_r(t_1)=-V$, and $r(t_1)\neq0$ can be interpreted as follows: based on the initial conditions, the robot moves away from the target location, but changes its orientation such that it points towards the target and then moves to the target at constant speed.

\begin{remark}\label{AttProofRemark}
The presented analysis is required mainly because the robot does not decelerate as it approaches the target, but moves at a constant linear speed. This is a standard setting in aerospace guidance literature, where, an interceptor is considered to move at constant speed and the forces acting on the interceptor are designed to point it towards its target, leading to a collision. In the current robotics application, however, in a practical scenario, while implementing the forces \eqref{FxyAtt}, the robot's speed can be switched off to zero, when it is close (with allowable deviations) to the target location.
\end{remark}

The robot inputs that allow it to maneuver around various types of obstacles, both static and dynamic, are designed next.

\subsection{Dynamic Vortex Repulsive Potential Field}\label{Sec:RepPF}

Consider two robots $R_i$ and $R_j$, as shown in Fig.~\ref{fig:2RobClsSchematic}, and let their velocities in the plane be such that they are on a collision path. For each robot, the repulsive field, as proposed in \cite{Park2008}, is selected in the form
\begin{subequations}\label{UrepDef}
\begin{align}
    \mathcal{U}_{\text{Rep}i} &= \begin{cases}
                                    \lambda \left(\cos{\gamma_i}\right)^2\frac{\norm{\mathbf{v}_{ij}}}{r_i} \ & \text{if } \frac{\pi}{2}<\gamma\leq \pi \\
                                    0 & \ \text{otherwise}
                                \end{cases}, \ \lambda>0, \label{URepEq} \\
    \intertext{where,}
    \cos{\gamma_i} &= \frac{\mathbf{v}_{ij}^T\mathbf{x}_{ij}}{\norm{\mathbf{v}_{ij}}r_i}, \ \mathbf{x}_{ij} = \begin{bmatrix} \left(x_{Rj}-x_{Ri}\right) \\ \left(y_{Rj}-y_{Ri}\right) \end{bmatrix}, \ \mathbf{v}_{ij} = \begin{bmatrix} \left(V\cos{\left(\phi_{Rj}\right)}-V\cos{\left(\phi_{Ri}\right)}\right) \\ \left(V\sin{\left(\phi_{Rj}\right)}-V\sin{\left(\phi_{Ri}\right)}\right) \end{bmatrix}.  \label{cosgammaExp}
\end{align}
\end{subequations}
As can be observed, the vectors $\mathbf{v}_{ij}$ and $\mathbf{x}_{ij}$ are the velocities and positions of robot $R_j$ relative to robot $R_i$. The term $\cos{\gamma_i}$ in \eqref{cosgammaExp} has a direct connection with the relative velocity terms introduced in Sec.~\ref{Sec:EngDyn}. By computing the dot product $\mathbf{v}_{ij}^T\mathbf{x}_{ij}$, expanding the norm $\norm{\mathbf{v}_{ij}}$, and rewriting $r_i$ using \eqref{rDef}, it can be shown that 
\begin{equation}\label{cosgammaVrRel}
    \cos{\gamma_i} = \frac{V_{r}}{V_{rel}}, \ V_{rel} = V\sqrt{2}\sqrt{1-\cos{\left(\phi_{Ri}-\phi_{Rj}\right)}}. 
\end{equation}
Now, if the two robots are on a collision course, then, the relative velocity $V_{r}<0$ and the relative speed $V_{rel}> 0$ as well. Thus, $\cos{\gamma_i}$ can be evaluated and the angle $\gamma_i$ lies in the domain $\frac{\pi}{2}<\gamma_i\leq \pi$, which is also the condition used in the definition of the repulsive potential field, \eqref{URepEq}.

\begin{remark}\label{TrigCondRemark}
The triggering condition defined in \eqref{URepEq} for a robot to ``activate'' the repulsive potential field can be evaluated by calculating the relative velocity $V_r$. This can be performed by the robots exchanging their state information - heading angles, positions, and speeds - with each other or by using on-board sensors to estimate these states, such as the infrared range and bearing sensor system, \cite{gutierrez2009open}. The experimental platform we use to demonstrate our algorithm provides each robot with the states of others. 
\end{remark}

Thus, a robot triggers the repulsive potential field only if it is on a collision path with another. In \eqref{cosgammaVrRel}, the relative speed condition $V_{rel}= 0$ holds when the heading angles of the two robots satisfy $\phi_{Ri}=\phi_{Rj}$. This condition occurs when they either are moving in parallel or are one behind another; in the latter case, since they are moving at the same linear speed, they also cannot overtake each other. In both these conditions, the repulsive field is not triggered. 

Prior to showing how the robot inputs are computed using the PF in \eqref{URepEq}, the gradients of $\mathcal{U}_{\text{Rep}i}$ are calculated. For robot $R_i$, these are
\begin{subequations}\label{GradURep}
\begin{align}
    \frac{\partial \mathcal{U}_{\text{Rep}i}}{\partial\left(x_{Rj}-x_{Ri}\right)} &= -\frac{\lambda V_{r}}{V_{rel}r^2}\left(2V_{\theta}\sin{\theta_i}+V_{r}\cos{\theta_i}\right) \label{GradURepx}\\
    \frac{\partial \mathcal{U}_{\text{Rep}i}}{\partial\left(y_{Rj}-y_{Ri}\right)} &= +\frac{\lambda V_{r}}{V_{rel}r^2}\left(2V_{\theta}\cos{\theta_i}-V_{r}\sin{\theta_i}\right). \label{GradURepy}
\end{align}
\end{subequations}

Now, the repulsive field assumes the vortex nature when the robot inputs are selected as
\begin{equation}\label{GradURepF}
    F_{xRi\text{Rep}} = -\frac{\partial \mathcal{U}_{\text{Rep}i}}{\partial\left(y_{Rj}-y_{Ri}\right)}, \ F_{yRi\text{Rep}} = +\frac{\partial \mathcal{U}_{\text{Rep}i}}{\partial\left(x_{Rj}-x_{Ri}\right)}.
\end{equation}
By fixing the signs of the components of the forces for each robot, it can be shown that the field ``rotates'' in the same direction for all robots; flipping the signs reverses the direction of rotation. As has been proved in \cite{Schey1997}, based on the expression of the repulsive field function and selecting the gradients according to \eqref{GradURepF}, the field becomes curl-free. 

\begin{figure}[!htpb]
    \centering
    \includegraphics[width=0.8\columnwidth]{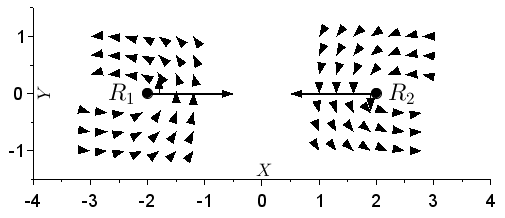}
	\caption{Gradients of the repulsive dynamic vortex PF when two robots are on a head-on collision course}
	\label{fig:DynVortexPF}
\end{figure}

The nature of the dynamic vortex PF can be understood from Fig.~\ref{fig:DynVortexPF}. In this figure, robots $R_1$ and $R_2$ are on a head-on collision course. The arrows around each robot show the direction of rotation of the other robot approaching it. As can be seen, since the signs of the gradients are the same for both robots, the direction of turn of both robots is the same. As the repulsive field is set to zero when the robots are moving away from each other, that is, when $V_r\geq0$, there are no arrows in the space ``behind'' each robot.

\begin{remark}\label{NonVortexRemark}
In the non-vortex case, the robot inputs are given by the negative gradients of the repulsive field and are of the form
\begin{equation}\label{GradURepFNonV}
    F_{xRi\text{Rep}} = -\frac{\partial \mathcal{U}_{\text{Rep}i}}{\partial\left(x_{Rj}-x_{Ri}\right)}, \ F_{yRi\text{Rep}} = - \frac{\partial \mathcal{U}_{\text{Rep}i}}{\partial\left(y_{Rj}-y_{Ri}\right)},
\end{equation}
which, as will be shown, may not guarantee collision avoidance for all cases of engagement.
\end{remark}

Various cases of robots, both cooperative and otherwise, interacting in the plane are considered next.

\subsection{Collision Avoidance between a Pair of Robots}\label{Sec:ClsAvd}

Collision avoidance is proved for the following cases: 
\begin{enumerate}[label=\textit{Case}~\arabic*.,leftmargin=*,align=left]
	\item A pair of cooperative robots on a collision path;
	\item A robot on a collision path with a stationary or non-cooperative robot;
	\item A cooperative robot and a non-cooperating robot actively seeking to collide by applying inputs derived using the Attractive PF.
\end{enumerate}
For \textit{Case}~3, the conditions on initial separation, initial relative velocity, and the PF parameters that ensure collision avoidance are identified.

For each of these cases, the types of robots that are considered are first defined and next, the proof of collision avoidance is presented.

\subsubsection{\textit{Case}~1}\label{Case1Sec}

\begin{definition}\label{CoopRobDef}
Cooperative robots are those that apply the same inputs as others when they are on a collision path.
\end{definition}
With this definition, collision avoidance for the case of cooperative robots is proved in the following theorem.

\begin{theorem}\label{CoopRobClsAvdThm}
Consider two robots moving in the plane. Let each robot know the position and velocity of the other, when measured in a common reference frame. If they are on a collision path and each applies inputs according to \eqref{GradURepF}, then, they avoid a collision with each other.
\end{theorem}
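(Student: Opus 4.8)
The plan is to apply the strategy announced after \cite[Lemma~2]{Chakravarthy1998}: insert the vortex inputs \eqref{GradURepF} into the polar engagement dynamics \eqref{VrVthDotExp}, reduce to a closed system in $(r,V_r,V_\theta)$, and then show that the collision conditions $V_r<0,\ V_\theta=0$ describe an unstable configuration that can never persist, so that $r$ is kept bounded away from $0$.

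First I would compute the two robots' force vectors from \eqref{GradURep}--\eqref{GradURepF}. Using $\theta_2=\pi+\theta_1$ together with $r_1=r_2=r$, $V_{r1}=V_{r2}=V_r$, $V_{\theta1}=V_{\theta2}=V_\theta$ and the common relative speed $V_{rel}$ (recall $V_{rel,1}=V_{rel,2}$), a short calculation should give $\mathbf{F}_{R2\mathrm{Rep}}=-\mathbf{F}_{R1\mathrm{Rep}}$, i.e.\ the two steering forces are equal and opposite --- this is precisely the reciprocity/cooperativity built into the field. Substituting $F_{xR2}-F_{xR1}=-2F_{xR1\mathrm{Rep}}$ and the analogous $y$-identity into \eqref{VriDotExp}--\eqref{VthiDotExp}, the $\cos\theta_1,\sin\theta_1$ terms cancel through $\cos^{2}\theta_1+\sin^{2}\theta_1=1$, leaving
\[
\dot V_r=\frac{V_\theta^{2}}{r}+\frac{4\lambda V_r V_\theta}{V_{rel}\,r^{2}},\qquad
\dot V_\theta=-\frac{V_r V_\theta}{r}+\frac{2\lambda V_r^{2}}{V_{rel}\,r^{2}},\qquad \dot r=V_r .
\]
I would also record that, by \eqref{cosgammaVrRel}, the field is active exactly when $V_r<0$ (and then automatically $V_{rel}>0$, since $V_{rel}=0$ forces $V_r=0$), whereas when it is inactive $\dot r=V_r\ge 0$ and the robots separate; hence it is enough to study the motion while $V_r<0$.

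The heart of the proof is then the observation that on the collision manifold $V_\theta=0$ (with $V_r<0$, $r>0$) one has $\dot V_\theta=2\lambda V_r^{2}/(V_{rel}r^{2})>0$, so $V_\theta=0$ is not invariant. More strongly, reading the $V_\theta$-equation as $\dot V_\theta=(-V_r/r)\,V_\theta+2\lambda V_r^{2}/(V_{rel}r^{2})$, the coefficient $-V_r/r$ is positive and the affine term is strictly positive, so $\dot V_\theta>0$ throughout $V_\theta\ge 0$; thus $V_\theta$ is strictly increasing there, crosses $0$ transversally, and never returns. Equivalently, the equilibrium $r,V_\theta,\tilde V_r=0$ --- with $\tilde V_r=V_r+V_{rel}$, using $V_r^{2}+V_\theta^{2}=V_{rel}^{2}$ exactly as in Sec.~\ref{Sec:AttPF} --- is unstable. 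Since \cite[Lemma~2]{Chakravarthy1998} makes $V_\theta\equiv 0$ necessary for collision, the robots cannot collide, which is the claim.

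The step I expect to be the real obstacle is turning ``unstable equilibrium'' into ``no collision'' rigorously, because the guidance lemma is stated for constant velocities whereas here the headings are actively steered, so an isolated instant with $V_\theta=0$ does not by itself preclude $r\to 0$. The clean fix uses the bound $V_r^{2}+V_\theta^{2}=V_{rel}^{2}\le 4V^{2}$: if $r(t)\to 0$ at some finite $t_f$, then along the approach $V_r$ stays bounded away from $0$, so the term $2\lambda V_r^{2}/(V_{rel}r^{2})$ in $\dot V_\theta$ grows like $r^{-2}$ and dominates the $r^{-1}$ term, forcing $|V_\theta|\to\infty$ and contradicting $|V_\theta|\le 2V$. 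I would also need to dismiss the degenerate case $\phi_{R1}=\phi_{R2}$ (parallel or tandem motion at the common speed $V$), where the field is never triggered but no collision is geometrically possible since the separation cannot be closed.
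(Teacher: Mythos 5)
Your reduction to the closed-loop polar dynamics is exactly the paper's: you obtain the same reciprocity identity $\mathbf{F}_{R2\text{Rep}}=-\mathbf{F}_{R1\text{Rep}}$ and the same system \eqref{VrVthDotExpCL}. Where you diverge is in how you conclude. The paper works with the Lyapunov function $\mathcal{V}_{LRep}=r+\tfrac12\left(V_r^2+V_\theta^2\right)$, computes $\dot{\mathcal{V}}_{LRep}=V_r\bigl(1+\tfrac{6\lambda}{V_{rel}r^2}V_rV_\theta\bigr)$, and argues via the ratio $\dot{\mathcal{V}}_{LRep}/V_r$ that $r$ cannot shrink to zero before $\mathcal{V}_{LRep}$ reaches its minimum, after which $\dot{\mathcal{V}}_{LRep}>0$, the equilibrium is unstable, and the trajectory is driven to $V_r=0$, $V_\theta=V_{rel}$. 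You instead exploit the sign structure of the $\dot V_\theta$ equation directly ($\dot V_\theta>0$ whenever the field is active and $V_\theta\ge0$), together with the constraint $V_r^2+V_\theta^2=V_{rel}^2\le 4V^2$ and a blow-up contradiction: if $r\to0$ in finite time, the $r^{-2}$ vortex term forces $V_\theta$ past its bound. Your observation that once $V_r$ hits $0$ the field deactivates and $\dot V_r=V_\theta^2/r\ge0$ keeps the robots separating is also the paper's closing step (the second of the remarks following the theorem). Each route buys something: the paper's makes the energy interpretation and the role of $\lambda,r_0$ visible and is reused verbatim in Cases 2--3 and the multi-robot extension; yours is more local and avoids having to locate the sign change of $\dot{\mathcal{V}}_{LRep}$.

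The one step you should not wave through is ``along the approach $V_r$ stays bounded away from $0$.'' What you get cheaply is that $V_r<0$ strictly on any approach to collision (once $V_r=0$ with $V_\theta\neq0$ the field switches off, $\dot V_r=V_\theta^2/r\ge0$, and $r$ never decreases again), but strict negativity is weaker than a uniform lower bound on $|V_r|$, and your divergence estimate
\begin{equation*}
\int \frac{|V_r|}{r^2}\,dt=\int \frac{d}{dt}\!\left(\frac{1}{r}\right)dt\to\infty
\end{equation*}
needs the latter to conclude $V_\theta\to\infty$. You must separately exclude the scenario $V_r\to0^-$ while $r\to0$ (for instance by checking that the $V_\theta^2/r$ and vortex terms in $\dot V_r$ are then inconsistent with $V_r$ remaining small and negative, or by restructuring the estimate to use only $\int|V_r|\,dt=r_0-r(t)$). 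To be fair, the paper's own argument has a soft spot of comparable size: it asserts ``for some $\lambda$ and $r_0$'' that $0<\dot{\mathcal{V}}_{LRep}/V_r<1$ on an initial interval, which quietly conditions an unconditionally stated theorem. So your proposal is not less rigorous than the target, but state the missing lemma explicitly rather than asserting it.
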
 

\begin{proof}
The proof of this theorem rests on showing that the conditions for collision, $V_{r}<0$ and $V_{\theta}=0$, are \textit{never} satisfied if each robot applies inputs according to \eqref{GradURepF}. This result is proved by analysing the non-linear engagement dynamics, \eqref{VrVthDotAtt}, with the substitution of the robots' inputs, given by \eqref{GradURep} and \eqref{GradURepF}. Since, for robot $R_i$, these inputs are a function of the LOS angle $\theta_i$ with respect to itself, in the analysis, the inputs for both robots are expressed using a single LOS angle variable. This is possible, as, from Fig.~\ref{fig:2RobClsSchematic}, the two LOS angles satisfy the relation $\theta_j=\pi+\theta_i$. Based on this relation, the robot inputs satisfy $F_{xRi\text{Rep}}=-F_{xRj\text{Rep}}$ and $F_{yRi\text{Rep}}=-F_{yRj\text{Rep}}$.

Using these results and after much simplification, in the closed-loop, the dynamics of $V_r$ and $V_{\theta}$ become
\begin{subequations}\label{VrVthDotExpCL}
	\begin{align}
		\dot{V}_{r} &= \frac{V_{\theta}^2}{r} + 4\left(\frac{\lambda}{V_{rel}r^2}\right)V_{r}V_{\theta} \label{VriDotExpCL} \\
		\dot{V}_{\theta} &=  \frac{-V_{r}V_{\theta}}{r} +2\left(\frac{\lambda}{V_{rel}r^2} \right)V_{r}^2.\label{VthiDotExpCL}
	\end{align}
\end{subequations}
As can be observed, the equilibrium point of these dynamics is $V_r=V_{\theta}=0$.

Based on these dynamics, the occurrence of collision is demonstrated by analysing the Lyapunov function
\begin{equation}\label{LyapFnRep2CO}
    \mathcal{V}_{LRep}= r + \frac{1}{2}V_{\theta}^2 + \frac{1}{2}V_r^2.
\end{equation}
Similar to \eqref{VrVthCirc}, from \eqref{VrVthExp}, it can be shown that
\begin{equation}\label{VrVthVrelCirc}
    V_r^2+V_{\theta}^2=V_{rel}^2.
\end{equation}
This constraint also indicates that $\mathcal{V}_{LRep}= r + 0.5V_{rel}^2$. Thus, the Lyapunov function $\mathcal{V}_{LRep}$ can be interpreted as the sum of a potential energy component, measured by $r$, and a kinetic energy component, given by the relative speed $V_{rel}$. Note that if the robots collide, which implies $r=0$, and continue to move together, then $V_{rel}=V_r=V_{\theta}=0$ as well. Thus, $\mathcal{V}_{LRep}=0$ at the equilibrium point, which corresponds to collision. 

Since the robots are on a collision course, indicated by $V_r(t=0)=-2V<0$ and $V_{\theta}(t=0)=0$, the dynamics of $\mathcal{V}_{LRep}$ is analysed under these conditions. Let, at $t=0$, the robots be separated by distance $r_0$. Next, evaluating the derivative of $\mathcal{V}_{LRep}$ along the trajectories of the states $\dot{r}=V_r$ and $V_r,V_{\theta}$ as given by \eqref{VrVthDotExpCL}, results in the expression
\begin{equation}\label{LyapFnRep2CODot1}
	\dot{\mathcal{V}}_{LRep} = V_r\left(1 + \frac{6\lambda}{V_{rel}r^2}V_{r}V_{\theta}\right).
\end{equation}
Thus, if $\dot{\mathcal{V}}_{LRep}<0$ for all values of the states, then, the equilibrium point is stable, which implies that collision occurs. However, the converse also holds true, an unstable equilibrium point, indicated by $\dot{\mathcal{V}}_{LRep}>0$, results only with an increase in the separation distance $r$, implying that the robots do not collide.

From \eqref{LyapFnRep2CODot1}, consider the ratio
\begin{equation}\label{VLVrComp}
    \frac{\dot{\mathcal{V}}_{LRep}}{V_r}=\left(1 + \frac{6\lambda}{V_{rel}r^2}V_{r}V_{\theta}\right).
\end{equation}
Since, at $t=0$, $\dot{V}_{\theta}>0$ and $V_r<0$, the robots begin to turn away from each other, while simultaneously also approaching each other, that is, the separation distance, $r$, between the robots reduces. Thus, for some $\lambda$ and $r_0$, a time instant, $t_1>0$, can be found when the ratio $0<\frac{\dot{\mathcal{V}}_{LRep}}{V_r}<1 \ \forall \ 0\leq t<t_1$. This implies that during this interval, the separation distance \emph{cannot} reduce faster than $\mathcal{V}_{LRep}$. Thus, even if $t=t_1$ is the time instant when $\mathcal{V}_{LRep}$ reaches its minimum, that is, when $\dot{\mathcal{V}}_{LRep}=0$, since $V_r\neq0$ at $t=t_1$, there exist \emph{non-zero} values of $r,V_r,V_{\theta}$ that satisfy $V_{r}V_{\theta}=-\frac{V_{rel}r^2}{6\lambda}<0$. Thus, there exists a non-zero separation distance when the Lyapunov function $\mathcal{V}_{LRep}$ reaches its minimum. This also implies that for $t>t_1$, the derivative of $\mathcal{V}_{LRep}$ changes sign.

Now, for $t>t_1$, since $\dot{\mathcal{V}}_{LRep}>0$, the origin becomes an unstable equilibrium point. Moreover, as the separation distance increases, the terms $\frac{V_{\theta}^2}{r}>0$ and $\frac{-V_{r}V_{\theta}}{r}>0$ dominate the derivatives of $V_r$ and $V_{\theta}$, respectively, implying that, there exists a time instant, say $t=t_2>t_1$, when $V_r=0$ and $V_{\theta}=V_{rel}$. Thus, as the trajectories in the $V_r-V_{\theta}$ space are attracted to this point, the conditions for collision do not hold any longer and the robots now ``deactivate'' the vortex repulsive field. 

\end{proof}

The following remarks help in obtaining an intuitive understanding of the effect of the inputs designed using the vortex repulsive field and the proof of Theorem~\ref{CoopRobClsAvdThm}:
\begin{enumerate}[wide=0pt,label=\textit{\roman*}.]
    \item The inputs to the robots are
        \begin{subequations}\label{URepF2}
            \begin{align}
                F_{xRi\text{Rep}} &= -\frac{\lambda V_{r}}{V_{rel}r^2}\left(2V_{\theta}\cos{\theta_i}-V_{r}\sin{\theta_i}\right)  \label{FRepx2}\\
                F_{yRi\text{Rep}} &= -\frac{\lambda V_{r}}{V_{rel}r^2}\left(2V_{\theta}\sin{\theta_i}+V_{r}\cos{\theta_i}\right), \label{FRepy2}
            \end{align}
        \end{subequations}
        from which can be seen that the magnitude of these forces increases with decreasing $r$. Thus, if the two robots are initially located close to each other, $r\approx0$, then, for any $\lambda>0$, these forces ensure that the robots turn away from each other at a faster rate. Similarly, if the robots are far away, $r\gg0$, then, these forces do not significantly influence the robots' maneuvers. The effect of bounded robot inputs on collision avoidance is discussed in a later section.
    \item As the robots deactivate the repulsive field once $V_r=0$ and $V_{\theta}=V_{rel}$ at $t=t_2$, the dynamics of the relative accelerations reduce to 
        \begin{equation*}
            \dot{V}_{r} = \frac{V_{\theta}^2}{r}>0, \ \dot{V}_{\theta} = 0,
        \end{equation*}
    implying the trajectories in the $V_r-V_{\theta}$ space do not enter the domain $V_r<0$. Thus, the forces eliminate oscillatory behaviour during collision avoidance. 
    \item Also, at $t=t_2$, the behaviour of the robots will now be defined by the attractive potential field, which, as discussed in Sec.~\ref{Sec:AttPF}, ensures that the robots reach their goal locations.
\end{enumerate}

\subsubsection{\textit{Case}~2}\label{Case2Sec}

In this section, it is shown that the inputs \eqref{GradURepF} also aid in a robot avoiding collisions with stationary or non-cooperative robots. Prior to presenting this proof, a non-cooperative robot is formally defined. 

\begin{definition}\label{NonCoopRobDef}
A non-cooperative robot is one that moves at a constant velocity and does not actively avoid collisions with other robots.
\end{definition}

\begin{theorem}\label{NonCoopRobClsAvdThm}
A robot applying the inputs \eqref{GradURepF} can avoid a collision with both stationary and non-cooperative robots.
\end{theorem}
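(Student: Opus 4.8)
The plan is to follow the structure of the proof of Theorem~\ref{CoopRobClsAvdThm}, using the fact that a stationary or non-cooperative robot $R_j$ exerts no input on itself, i.e. $F_{xRj}=F_{yRj}=0$. Substituting this together with the vortex inputs \eqref{URepF2} of $R_i$ into the engagement dynamics \eqref{VrVthDotExp} and simplifying with $\cos^2\theta_i+\sin^2\theta_i=1$, I expect to obtain the closed-loop relative dynamics
\begin{align*}
\dot{V}_r &= \frac{V_\theta^2}{r} + 2\left(\frac{\lambda}{V_{rel}r^2}\right)V_r V_\theta, &
\dot{V}_\theta &= \frac{-V_r V_\theta}{r} + \left(\frac{\lambda}{V_{rel}r^2}\right)V_r^2,
\end{align*}
which are exactly \eqref{VrVthDotExpCL} with the coefficients halved --- the expected consequence of only one of the two robots reacting. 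The equilibrium is again $V_r=V_\theta=0$, and $V_r^2+V_\theta^2=V_{rel}^2$ from \eqref{VrVthVrelCirc} still holds, being only the polar decomposition of the relative-velocity vector.

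For a stationary obstacle, $\mathbf v_j=0$, so $V_{rel}=\norm{\mathbf v_{ij}}=\norm{\mathbf v_i}=V$ is constant, and the collision configuration at $t=0$ forces $V_r(0)=-V$, $V_\theta(0)=0$. I would reuse the Lyapunov function $\mathcal V_{LRep}=r+\tfrac12 V_\theta^2+\tfrac12 V_r^2$ of \eqref{LyapFnRep2CO}; a short computation along the closed loop should give $\dot{\mathcal V}_{LRep}=V_r\bigl(1+\tfrac{3\lambda}{V_{rel}r^2}V_r V_\theta\bigr)$, the analogue of \eqref{LyapFnRep2CODot1} with coefficient $3$ in place of $6$. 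From here the Case~1 reasoning transfers: since $\dot V_\theta(0)=\lambda V_r^2/(V_{rel}r_0^2)>0$, $R_i$ immediately turns ($V_\theta$ becomes positive) while still closing in, so $V_r V_\theta<0$; there is a first instant $t_1>0$ at which $\dot{\mathcal V}_{LRep}=0$ with $V_r(t_1)\neq0$ and hence $r(t_1)\neq0$; and for $t>t_1$ one has $\dot{\mathcal V}_{LRep}>0$, so the origin is unstable, $r$ grows, the geometric terms $V_\theta^2/r$ and $-V_r V_\theta/r$ come to dominate, and $(V_r,V_\theta)\to(0,V_{rel})$, so $V_r<0,\ V_\theta=0$ is never attained.

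For a non-cooperative robot the only change is that $R_j$ travels at a constant \emph{nonzero} velocity, so $V_{rel}$ is time-varying; however, whenever the repulsive field is active it satisfies $0<V_{rel}\leq 2V$ (the triggering condition precludes $V_{rel}=0$, and the triangle inequality on two speed-$V$ vectors bounds it above). The expression $\dot{\mathcal V}_{LRep}=V_r\bigl(1+\tfrac{3\lambda}{V_{rel}r^2}V_r V_\theta\bigr)$ and the whole chain --- existence of $t_1$ with $r(t_1)\neq0$, instability for $t>t_1$, convergence to $(V_r,V_\theta)=(0,V_{rel})$ --- uses $V_{rel}$ only through this two-sided bound, so it should carry over verbatim; once the field switches off at $V_r=0$ both robots again move at constant velocity, $V_{rel}$ freezes, and $\dot V_r=V_\theta^2/r>0$ with $\dot V_\theta=0$ keeps the relative velocity out of the half-plane $V_r<0$. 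A cleaner equivalent route, which I would also note, is that the input \eqref{GradURepF} depends only on the Galilean-invariant pair $\mathbf x_{ij},\mathbf v_{ij}$, so passing to the frame co-translating with $R_j$ makes $R_j$ a stationary obstacle and reduces this case to the previous one, the sole residue being that $R_i$'s speed in that frame --- equivalently $V_{rel}$ --- is bounded rather than constant.

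The step I expect to be the main obstacle is exactly this loss of a constant $V_{rel}$: in Theorem~\ref{CoopRobClsAvdThm} the constant robot speeds made $V_{rel}=V\sqrt2\sqrt{1-\cos(\phi_{Ri}-\phi_{Rj})}$ an exact algebraic quantity, whereas here one must argue that the qualitative Lyapunov argument --- the finite-time sign change of $\dot{\mathcal V}_{LRep}$ at a $t_1$ with $r(t_1)\neq0$, and the eventual attraction to $(0,V_{rel})$ --- needs only that $V_{rel}$ stays bounded away from $0$ and $\infty$, and that $V_{rel}$ indeed remains in $(0,2V]$ while the field is on. A secondary point to verify is that a transient dip of $V_{rel}$ toward $0$, which momentarily disables the field, cannot by itself produce a collision; this follows because $V_{rel}=0$ forces equal headings and equal speeds, a configuration already excluded from being a collision path in Sec.~\ref{Sec:EngDyn}.
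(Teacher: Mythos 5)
Your proposal is correct and follows essentially the same route as the paper: set $F_{xRj}=F_{yRj}=0$, obtain the closed-loop dynamics \eqref{VrVthDotExpCL2} with halved coefficients, reuse the Lyapunov function \eqref{LyapFnRep2CO}, and get $\dot{\mathcal V}=-|V_r|\bigl(1+\tfrac{3\lambda}{V_{rel}r^2}V_rV_\theta\bigr)$ before transferring the Case~1 instability argument. The paper itself stops at "employing arguments similar to those used in the proof of Theorem~\ref{CoopRobClsAvdThm}," so your extra care about the time-varying $V_{rel}$ (which is in fact already time-varying in Case~1) and the reduction to a stationary obstacle via the co-moving frame only adds detail beyond what the paper records.
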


\begin{proof}
Consider a robot, $R_i$, which is on a collision path with either a stationary or non-cooperative robot. Since, both stationary and non-cooperative robots do not apply the inputs \eqref{GradURepF}, in the dynamics \eqref{VrVthDotExp} for robot $R_i$, the terms $F_{xRj}=F_{yRj}=0$. The relative velocity expressions for a stationary robot reduce to $V_{r}=-V\cos{\left(\phi_{Ri}-\theta_{i}\right)}$ and $V_{\theta}=- V\sin{\left(\phi_{Ri}-\theta_{i}\right)}$, while, for a constant velocity non-cooperative robot, the relative velocity terms are as given in \eqref{VrVthExp}. However, in both cases, the relative accelerations have the same expressions.

Thus, when robot $R_i$ applies the inputs \eqref{GradURepF}, in the closed-loop, the relative accelerations become
\begin{subequations}\label{VrVthDotExpCL2}
	\begin{align}
		\dot{V}_{r} &= \frac{V_{\theta}^2}{r} + 2\left(\frac{\lambda}{V_{rel}r^2}\right)V_{r}V_{\theta} \label{VriDotExpCL2} \\
		\dot{V}_{\theta} &=  \frac{-V_{r}V_{\theta}}{r} +\left(\frac{\lambda}{V_{rel}r^2} \right)V_{r}^2.\label{VthiDotExpCL2}
	\end{align}
\end{subequations}
Now, by defining a Lyapunov function, denoted by $\mathcal{V}_{LRepNC}$ with an expression identical to the one in Theorem~\ref{CoopRobClsAvdThm}, its derivative is analysed. The derivative of $\mathcal{V}_{LRepNC}$ is given by
\begin{align}\label{LyapFnDotCONonCO}
	\dot{\mathcal{V}}_{LRepNC} = -|V_r|\left(1 + \frac{3\lambda}{V_{rel}r^2}V_{r}V_{\theta}\right).
\end{align}
In this case, employing arguments similar to those used in the proof of Theorem~\ref{CoopRobClsAvdThm}, it can be shown that, the cooperative robot is able to avoid a collision with the non-cooperative one. 
\end{proof}

\begin{remark}\label{MagTermsRemark}
Note the difference in the coefficients in the terms in \eqref{VrVthDotExpCL} and \eqref{VrVthDotExpCL2} derived using the repulsive field inputs, \eqref{GradURepF}. This difference is natural, since in \textit{Case}~1, the robots reciprocate and turn away from each other, thus increasing the magnitude of the relative acceleration component. On the other hand, in \textbf{Case}~2, since it is only one of the robots that turns away from the non-cooperative robot, the relative acceleration is smaller. 
\end{remark}

\begin{remark}\label{DiffVRemark}
Since the proof of collision avoidance is shown in the relative velocity space, our algorithm is applicable even when cooperative robots on a collision path move at different constant linear speeds. 
\end{remark}

\subsubsection{\textit{Case}~3}\label{Case3Sec}

In this case, a robot, say $R_j$, that is actively seeking to collide with a cooperative robot, $R_i$, is considered. An ``attacking'' robot is defined as

\begin{definition}\label{AttackRobDef}
An attacking robot is one that chooses a cooperative robot as a goal (or target) and applies its inputs according to the inputs defined by the negative gradient of the attractive PF, in \eqref{FxyAtt}.
\end{definition}

Prior to demonstrating collision avoidance against an attacking robot, a few notations are introduced based on the engagement geometry shown in Fig.~\ref{fig:2RobClsSchematic}. Let $R_2$ be the attacking robot which is trying to collide with the cooperative robot $R_1$. Thus, the attractive PF for $R_2$ is
\begin{equation}\label{UAttackPF}
    \mathcal{U}_{\text{Attack}2} = \kappa\sqrt{\left(x_{R2}-x_{R1}\right)^2 + \left(y_{R2}-y_{R1}\right)^2},
\end{equation}
and following the discussion presented in Sec.~\ref{Sec:AttPF}, its inputs are chosen as the negative gradients
\begin{subequations}\label{FxyAttack}
\begin{align}
    F_{x2} &= \frac{-\partial \mathcal{U}_{\text{Attack}2}}{\partial\left(x_{R2}-x_{R1}\right)}=+\kappa \cos{\left(\theta_2\right)}=-\kappa \cos{\left(\theta_1\right)} \label{FxAttack}\\
    F_{y2} &= \frac{-\partial \mathcal{U}_{\text{Attack}2}}{\partial\left(y_{R2}-y_{R1}\right)}=+\kappa \sin{\left(\theta_2\right)}=-\kappa \sin{\left(\theta_1\right)}, \label{FyAttack}
\end{align}
\end{subequations}
since, $\theta_2=\pi+\theta_1$.

If $R_1$ and $R_2$ are on a collision path, then, the cooperative robot $R_1$ considers the robot $R_2$ as a dynamic obstacle. Now, $R_1$ ``projects'' the dynamic vortex PF around robot $R_2$ and applies its inputs in order to avoid a collision; these inputs are given by \eqref{GradURepF}. This projection of the repulsive field by one robot on another is visualised in Fig.~\ref{fig:DynVortexPF}. The dynamics of the relative velocities become
\begin{subequations}\label{VrVthDotExpCL3}
	\begin{align}
		\dot{V}_{r} &= \frac{V_{\theta}^2}{r} + 2\left(\frac{\lambda}{V_{rel}r^2}\right)V_{r}V_{\theta} - \kappa \label{VriDotExpCL3} \\
		\dot{V}_{\theta} &=  \frac{-V_{r}V_{\theta}}{r} +\left(\frac{\lambda}{V_{rel}r^2} \right)V_{r}^2.\label{VthiDotExpCL3}
	\end{align}
\end{subequations}
As can be observed, if the robot $R_1$ does not project a repulsive field, that is, if $\lambda=0$, and if the robots are moving such that the collision conditions are satisfied, then, $R_2$ will collide with $R_1$. This is also supported by the discussion in Sec.~\ref{Sec:AttPF}.

The conditions when collision is avoided are identified in the following theorem.

\begin{theorem}\label{AttackRobClsAvdThm}
Given the PF parameters $\kappa,\lambda>0$, if the initial separation distance, $r_0$, between the attacking and cooperative robots satisfies $r_0 \geq\sqrt{3\lambda V}$, where $V$ is the linear speed of each robot, then, the cooperative robot is able to avoid colliding with the attacking robot.
\end{theorem}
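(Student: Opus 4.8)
The plan is to mirror the Lyapunov-based arguments of Theorem~\ref{CoopRobClsAvdThm} and Theorem~\ref{NonCoopRobClsAvdThm}, now carrying the extra constant drift $-\kappa$ that the attacking robot injects into $\dot V_r$ and reading off the separation threshold from it. First I would substitute the cooperative robot's vortex inputs \eqref{GradURepF} for $R_1$ and the attacker's attractive inputs \eqref{FxyAttack} for $R_2$ into the engagement dynamics \eqref{VrVthDotExp}, using $\theta_2=\pi+\theta_1$ to express everything through a single LOS angle; this produces the closed-loop system \eqref{VrVthDotExpCL3}, whose only state with the collision signature $V_{\theta}=0$ is $V_r=-V_{rel}$ (by the constraint $V_r^2+V_{\theta}^2=V_{rel}^2$, analogous to \eqref{VrVthVrelCirc}, with $V_{rel}\le 2V$). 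For the analysis I would use the energy-type function $\mathcal{V}=\kappa r+\tfrac12 V_{\theta}^2+\tfrac12 V_r^2=\kappa r+\tfrac12 V_{rel}^2$, which vanishes at contact (when the robots move together, $r=V_{rel}=0$).

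Differentiating $\mathcal{V}$ along \eqref{VrVthDotExpCL3}, the term $-\kappa V_r$ coming from $V_r\dot V_r$ cancels $\kappa\dot r=\kappa V_r$, leaving $\dot{\mathcal{V}}=3\lambda V_r^2V_{\theta}/(V_{rel}r^2)$, which is non-negative on the active set $\{V_r<0,\ V_{\theta}\ge 0\}$; note $V_{\theta}\ge 0$ is invariant there since $\dot V_{\theta}|_{V_{\theta}=0}=\lambda V_r^2/(V_{rel}r^2)>0$. Exactly as in Theorem~\ref{CoopRobClsAvdThm}, this already shows the collision conditions cannot be sustained: whenever $V_{\theta}=0$ and $V_r<0$ one has $\dot V_{\theta}>0$, so the trajectory instantly leaves $\{V_{\theta}=0\}$. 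The sharper ingredient I would add is the identity $\tfrac{d}{dt}(rV_{\theta})=\lambda V_r^2/(V_{rel}r)>0$, so that $rV_{\theta}$ increases strictly from $0$ while the field is active; since also $V_{\theta}\le V_{rel}\le 2V$, a collision ($r\to 0$) is impossible within a single activation episode.

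The hypothesis $r_0\ge\sqrt{3\lambda V}$ makes this quantitative. Writing the monotone relations in the $r$-variable, $\tfrac{d(rV_{\theta})}{dr}=\lambda V_r/(V_{rel}r)$ and $\tfrac{d\mathcal V}{dr}\le 0$, and using $|V_r|/V_{rel}=|\cos\gamma|\le 1$ together with $V_{rel}\le 2V$, I would bound the worst-case loss of separation over one activation segment and conclude $r$ stays strictly positive, the segment ending at a state $V_r=0,\ V_{\theta}=V_{rel}$ attained at positive separation; there $\dot V_r=V_{\theta}^2/r>0$ (as in the remarks following Theorem~\ref{CoopRobClsAvdThm}), the trajectory cannot re-enter $V_r<0$, the repulsive field switches off permanently, and the attractive field of Sec.~\ref{Sec:AttPF} then steers $R_1$ to its own goal. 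Hence $V_r<0,\ V_{\theta}=0$ is never met and no collision occurs. Equivalently, one can argue as in Theorem~\ref{CoopRobClsAvdThm} with the unscaled energy \eqref{LyapFnRep2CO}, for which $\dot{\mathcal V}_{LRep}=(1-\kappa)V_r+3\lambda V_r^2V_{\theta}/(V_{rel}r^2)$, and show that the turnaround instant $t_1$ at which $\dot{\mathcal V}_{LRep}$ changes sign occurs at $r(t_1)>0$ whenever $r_0\ge\sqrt{3\lambda V}$.

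The main obstacle is precisely this last quantitative step: converting ``$rV_{\theta}$ and $\mathcal V$ increase'' into the explicit threshold $\sqrt{3\lambda V}$ requires a uniform lower bound on $\cos\gamma=V_r/V_{rel}$ along the active segment (equivalently, control of how fast $V_{\theta}$ can grow relative to the shrinking of $r$), so that the integral controlling the loss of $r$ depends on $\lambda$ and $V$ alone; one must also verify that, once the field deactivates, the attacker cannot re-establish a collision course that re-triggers the field at a separation below the threshold, i.e., that the first (or worst) episode suffices.
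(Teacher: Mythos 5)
Your proposal follows the paper's route almost exactly: the same closed-loop dynamics \eqref{VrVthDotExpCL3}, the same Lyapunov function $\mathcal{V}_{LAtck}=\kappa r+\tfrac12 V_{\theta}^2+\tfrac12 V_r^2$, the same cancellation giving $\dot{\mathcal{V}}_{LAtck}=\tfrac{3\lambda}{V_{rel}r^2}V_r^2V_{\theta}$, and the same split between the head-on case (where $V_{\theta}\ge 0$ and $\dot{\mathcal{V}}_{LAtck}\ge 0$ need no condition on $r_0$) and the case $V_r(0)<0,\ V_{\theta}(0)<0$ where the threshold matters. The identity $\tfrac{d}{dt}(rV_{\theta})=\lambda V_r^2/(V_{rel}r)$ is a nice addition not in the paper. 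However, you leave the one step that actually produces the stated threshold as an acknowledged ``obstacle,'' and you misdiagnose what it requires: no uniform lower bound on $\cos\gamma$ is needed. The paper closes it with the elementary chain
\begin{equation*}
\frac{\dot{\mathcal{V}}_{LAtck}}{\dot{r}}=\frac{3\lambda}{V_{rel}r^2}V_rV_{\theta}
\le \frac{3\lambda}{2r_0^2}V_{rel}\le \frac{3\lambda V}{r_0^2}\le 1,
\end{equation*}
using only $V_rV_{\theta}\le\tfrac12(V_r^2+V_{\theta}^2)=\tfrac12 V_{rel}^2$ (from \eqref{VrVthVrelCirc}), $V_{rel}\le 2V$, and the separation evaluated at $r_0$; the condition $r_0\ge\sqrt{3\lambda V}$ is exactly what makes the last inequality hold, whence $\mathcal{V}_{LAtck}$ reaches its minimum (at $V_{\theta}=0$) before $r$ can reach zero. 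Since the theorem's content \emph{is} this threshold, omitting that bound is a genuine gap rather than a detail.

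A second, substantive error: you assert that once $V_r=0,\ V_{\theta}=V_{rel}$ is reached, $\dot V_r=V_{\theta}^2/r>0$ and ``the repulsive field switches off permanently.'' That is true in \textit{Case}~1, but not here: after deactivation the attacker's attractive input persists, so the residual dynamics are $\dot V_r=V_{\theta}^2/r-\kappa$ and $\dot V_{\theta}=-V_rV_{\theta}/r$, and $\dot V_r$ can be negative, re-establishing a collision course. The paper's proof explicitly embraces this: the field re-triggers, $\dot{\mathcal{V}}_{LAtck}>0$ again because $V_{\theta}>0$ at re-triggering, and the trajectory oscillates about $V_r=0,\ V_{\theta}>0$ indefinitely. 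You flag this verification as outstanding in your last paragraph, but the body of your argument relies on the false ``permanent switch-off'' claim, so the re-triggering analysis must be supplied rather than deferred.
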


\begin{proof}
Based on the dynamics \eqref{VrVthDotExpCL3}, collision avoidance is demonstrated by analysing the Lyapunov function
\begin{equation}\label{LyapFnAtck}
    \mathcal{V}_{LAtck}= \kappa r + \frac{1}{2}V_{\theta}^2 + \frac{1}{2}V_r^2,
\end{equation}
with equilibrium conditions $r=V_r=V_{\theta}=0$; these equilibrium conditions are similar to those selected in \textbf{Case}~1. From \eqref{VrVthDotExpCL3} as well as the relation $\dot{r}=V_r$, the derivative of $\mathcal{V}_{LAtck}$ becomes
\begin{equation}\label{LyapFnAtckDot}
    \dot{\mathcal{V}}_{LAtck} = \frac{3\lambda}{V_{rel}r^2}V_{r}^2V_{\theta}.
\end{equation}

First, the case when the robots are initially on a head-on collision course is considered. Let $V_r(t=0)=-2V$ and $V_{\theta}(t=0)=0$, then, there exists a time interval, $0<t\leq t_1$, when $\dot{\mathcal{V}}_{LAtck}>0$, since, during this interval, from \eqref{VthiDotExpCL3}, the relative acceleration $\dot{V}_{\theta}>0$. Thus, in this interval, the equilibrium point is unstable, indicating that collision is avoided. 

Since the states $V_r,V_{\theta}$ satisfy the condition \eqref{VrVthVrelCirc}, let, at $t=t_1$, $V_{\theta}(t=t_1)=2V$, $V_r=0$, and $r(t=t_1)=r_1>0$. Now, the cooperative robot $R_1$ stops projecting the vortex repulsive field, since the triggering condition given in \eqref{URepEq} does not hold. Thus, at $t=t_1$, the dynamics \eqref{VrVthDotExpCL3} become
\begin{equation}\label{VrVthDotExpCL3t1}
    \dot{V}_{r} = \frac{V_{\theta}^2}{r} - \kappa, \ \dot{V}_{\theta} = \frac{-V_{r}V_{\theta}}{r}.
\end{equation}
Now, if for some $t=t_2\geq t_1$, the separation distance $r$ and the relative velocity $V_{\theta}$ are such that $\dot{V}_{r}<0$ then, the attacking robot begins to approach the cooperative robot. However, the cooperative robot again projects the repulsive field and the collision avoidance maneuver starts again. Since, for $t\geq t_2$, $V_{\theta}>0$, the derivative of the Lyapunov function again satisfies $\dot{\mathcal{V}}_{LAtck}>0$, thus, indicating an unstable equilibrium $(r,V_r,V_{\theta})=0$. 

This result holds for any $\lambda,\kappa>0$ and $r_0$.

Suppose the initial conditions are $V_r(t=0)<0$ and $V_{\theta}(t=0)<0$. Now, $\dot{\mathcal{V}}_{LAtck}<0$, indicating a possibility of the equilibrium states being stable. However, by imposing conditions on the parameter $\lambda$ and the initial separation distance $r_0$, it may be possible that the robots have a non-zero separation distance at the time instant when $\mathcal{V}_{LAtck}=0$. As used in the proof of Theorem~\ref{CoopRobClsAvdThm}, compute the ratio
\begin{align}\label{VAtckVrRatio}
    \frac{\dot{\mathcal{V}}_{LAtck}}{\dot{r}} &= \frac{3\lambda}{V_{rel}r^2}V_{r}V_{\theta} \leq \frac{3\lambda}{2r_0^2}V_{rel}. \nonumber
    \intertext{Thus, for a chosen $\lambda>0$, if}
    r_0 &\geq\sqrt{3\lambda V} \Rightarrow \frac{\dot{\mathcal{V}}_{LAtck}}{\dot{r}} < 1,
\end{align}
indicating that $r$ cannot go to zero before $\mathcal{V}_{LAtck}$ reaches a minimum, implying that there will be a non-zero separation distance between the attacking and cooperative robots.

Note that $\mathcal{V}_{LAtck}$ reaches a minimum when $V_{\theta}=0$, since, for $V_r(t=0)<0$ and $V_{\theta}(t=0)<0$, the relative acceleration $\dot{V}_{\theta}>0$, from \eqref{VthiDotExpCL3}. Denote this time instant as $t=t_3$. Now, if $V_r(t=t_3)<0$, it has already been shown that collision does not occur. On the other hand, if $V_r(t=t_3)\equiv0$, depending on the magnitudes of the parameters, the robots are still separated by a non-zero distance, but are moving parallel to each other.

This analysis shows that the trajectories in the $V_r-V_{\theta}$ space oscillate around $V_r=0$ and $V_{\theta}>0$, indicating that the cooperative robot keeps turning away from the attacking robot until the collision avoidance conditions are violated. Once this happens and the attacking robot maneuvers to collide with the cooperative one, the latter again begins to turn away to avoid colliding with the former.
\end{proof}

\subsection{Collision avoidance for non-point cooperative robots with bounded inputs}\label{Sec:NonZeroRob}

The results presented so far consider the robots to be points. In this section, the analysis on collision avoidance is extended to cooperative robots that are defined by circles of radius $R_{\text{Rob}}>0$ and also have limits on their accelerations. This analysis will aid in determining:
\begin{enumerate}[wide=0pt,label=\textit{\roman*}.]
    \item the influence of limits that naturally bound the acceleration inputs that can be applied by a robot, when the input commanded by the repulsive field is greater than this limit;
    \item the amount of space that is needed by a robot while performing the collision avoidance maneuver, both against other mobile robots as well as stationary obstacles; and
    \item the role of the PF parameters $\lambda$ and $\kappa$ in the case of avoiding collision with an attacking robot.
\end{enumerate}

First, the relation between the acceleration inputs calculated using the vortex potential field and limits on these inputs is derived. From \eqref{GradURep} and \eqref{GradURepF}, for a cooperative robot $R_i$, it can be seen that if the separation distance $r$ is very small, then clearly the inputs will have large magnitudes. As discussed in Sec.~\ref{Case1Sec}, it is this relation that enables point robots to turn away and avoid a collision even if their initial separation distance is very small. However, if these inputs are bounded, say according to $|F_{xRi\text{Rep}},F_{yRi\text{Rep}}|\leq F_{\lim}$, then, for small values of $r$, the inputs become saturated at $F_{\lim}$. 

In such cases, similar to \eqref{VAtckVrRatio}, for a chosen repulsive field parameter $\lambda$ and by setting $V_{rel}=2V$, by introducing another design parameter $r^*>0$, the robots' inputs can be defined as the output of the function
\begin{subequations}\label{FxySatFn}
\begin{align}
     F_{xRi\text{Rep}} &= \begin{cases}
                            -\dfrac{\lambda }{V_{rel}r^2}\left(2V_{r}V_{\theta}\cos{\theta_i}-V_{r}^2\sin{\theta_i}\right) \ & \text{if } r>r^* \\
                            -F_{\lim}\text{sign}\left(2V_{r}V_{\theta}\cos{\theta_i}-V_{r}^2\sin{\theta_i}\right) & \ \text{otherwise}
                            \end{cases} \label{FxSatFn}\\
     F_{yRi\text{Rep}} &= \begin{cases}
                            -\dfrac{\lambda }{V_{rel}r^2}\left(2V_rV_{\theta}\sin{\theta_i}+V_{r}^2\cos{\theta_i}\right) \ & \text{if } r>r^* \\
                            -F_{\lim}\text{sign}\left(2V_rV_{\theta}\sin{\theta_i}+V_{r}^2\cos{\theta_i}\right) & \ \text{otherwise}
                            \end{cases}. \label{FySatFn}
\end{align}
\end{subequations}

The design parameter $r^*$ can be expressed in terms of a minimum initial separation distance and the acceleration limit $F_{\lim}$ that ensures that robots, each of radius $R_{\text{Rob}}$, can avoid ``grazing'' each other while performing the collision avoidance maneuver. To do so, consider two such cooperative robots, $R_1$ and $R_2$, as shown in Fig.~\ref{fig:2RobClsSchematicHO}. Let these robots be initially separated by a distance $2l\leq r^*$ and also be moving on a head-on collision course, so that the relative speed is the highest at $V_{rel}=2V$. As has been proved in Theorem~\ref{CoopRobClsAvdThm}, the robots begin to turn away from each other to avoid a collision. However, given their initial separation distance, the magnitudes of the inputs to each robot become $F_{\lim}$. As a result, the robots trace a circle with radius
\begin{equation}\label{RTurnExp}
   R_{\text{turn}} = \frac{V^2}{F_{\lim}}.
\end{equation}
This result is derived using the expression
\begin{equation*}
    R_{\text{turn}} = \Bigg|\frac{\left(\dot{x}^2+\dot{y}^2\right)^{\frac{3}{2}}}{\dot{x}\ddot{y}-\dot{y}\ddot{x}}\Bigg|,
\end{equation*}
where, $\dot{x},\dot{y}$ and $\ddot{x},\ddot{y}$ are the velocity and acceleration components in Cartesian space and are given by \eqref{RobotDyn} and \eqref{RobotDynAccl}, respectively.

Since the robots are homogeneous, the value $F_{\lim}$ is the same for each robot. Owing to this homogeneity, the motions of the two robots are symmetric. Now, the point when the separation distance is the smallest, denoted by the distance $2d$ in Fig.~\ref{fig:2RobClsSchematicHO}, can be shown to satisfy
\begin{equation}\label{dSepExp}
   d = \sqrt{R_{\text{turn}}^2+l^2}-R_{\text{turn}}.
\end{equation}
This occurs at the time-instant defined by $t=t_1$ in the proof of Theorem~\ref{CoopRobClsAvdThm}. Thus, the circles traced by the robots touch each other if the initial separation distance $l=0$, which implies collision.

For the robots to avoid grazing each other, the distance $d$ should satisfy $2d\geq2R_{\text{Rob}}$, which, in turn, leads to
\begin{equation}\label{lFlimExp}
   R_{\text{turn}} \leq \frac{l^2-R_{\text{Rob}}^2}{2R_{\text{Rob}}}\ \Rightarrow F_{\lim} \geq \frac{2R_{\text{Rob}}V^2}{l^2-R_{\text{Rob}}^2}.
\end{equation}
This result clearly indicates that if the initial difference $\left(l^2-R_{\text{Rob}}^2\right)$ is small, the robot radius $R_{\text{Rob}}$ and speed $V$ are large, then, the acceleration limit has to be proportionately large for the robots to turn away from each other and avoid a collision. Note that if $F_{\lim}$ is large, then, the radius $R_{\text{turn}}$ is also small. 

In the case of a non-cooperative robot that is moving at a constant velocity, the acceleration limit should satisfy
\begin{equation}\label{lFlimExpNonCoop}
   F_{\lim} \geq \frac{4R_{\text{Rob}}V^2}{l^2-4R_{\text{Rob}}^2},
\end{equation}
which is \textit{larger} than what is required when two cooperative robots are avoiding each other. This too should be expected since it is only the cooperative robot that is performing the collision avoidance maneuver.

Thus, based on the acceleration capacity of the robot, their initial separation distance and hence, $r^*$, can be suitably selected.

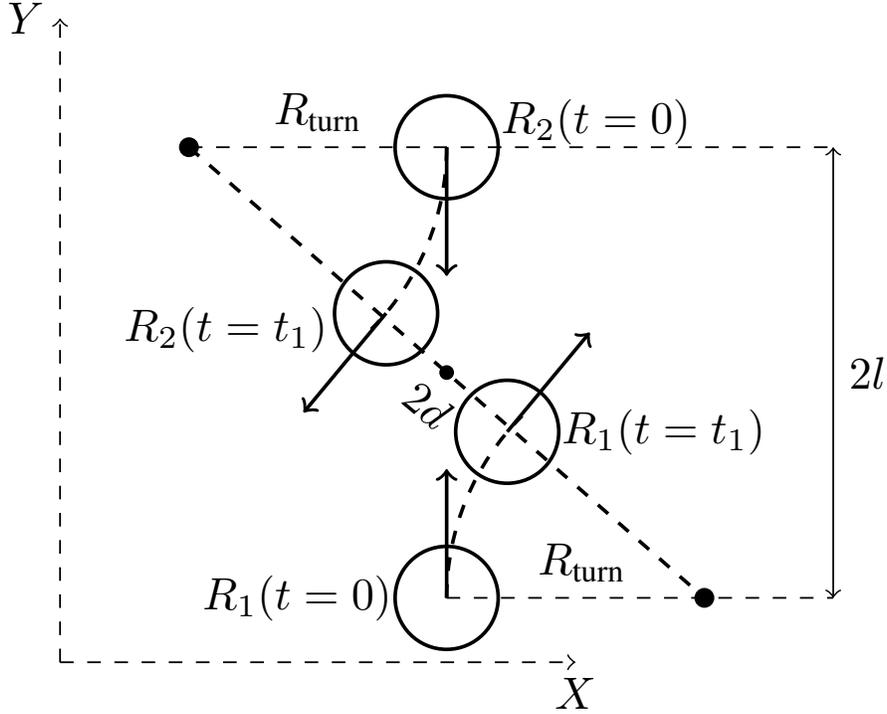
\begin{figure}[ht]
\centering
\resizebox{0.75\columnwidth}{!}{
	
	\begin{tikzpicture} 
		
			\draw[->,dashed] (-2,0) -- (2,0) node[pos=1,below] {$X$}; 
			\draw[->,dashed] (-2,0) -- (-2,5) node[pos=1,left] {$Y$};
			
			\draw[thick] (1,0.5) circle (0.4cm);
			\draw (0.7,0.5) node[left] {$R_1(t=0)$};
			\draw[->,thick] (1,0.5) -- (1,1.5); 
			
			\draw[thick] (1,4) circle (0.4cm);
			\draw (1.3,4.2) node[right] {$R_2(t=0)$};
			\draw[->,thick] (1,4) -- (1,3); 
			
			\draw[dashed,thick] (1,0.5) arc (180:135:2cm); 
			\draw[dashed,thick] (1,4) arc (360:315:2cm); 
			
			\draw[thick] (1.47,1.79) circle (0.4cm);
			\draw (1.77,1.79) node[right] {$R_1(t=t_1)$};
			\draw[->,thick] (1.47,1.79) -- (2.11,2.56); 
			
			\draw[thick] (0.53,2.71) circle (0.4cm);
			\draw (0.2,2.58) node[left] {$R_2(t=t_1)$};
			\draw[->,thick] (0.53,2.71) -- (-0.11,1.94); 
			
			
			\draw[fill=black] (-1,4) circle (0.07cm);
			\draw[fill=black] (3,0.5) circle (0.07cm);
			
			\draw[dashed] (-1,4) -- (4,4) node[pos=0.2,above] {$R_{\text{turn}}$};
			\draw[dashed] (1,0.5) -- (4,0.5) node[pos=0.35,above] {$R_{\text{turn}}$};
			
			\draw[dashed,thick] (-1,4) -- (3,0.5);
			
			\draw[<->] (4,4) -- (4,0.5) node[pos=0.5,right] {$2l$};
			
			\node[rotate=-40] (M) at (0.85, 2) {$2d$};
			\draw[fill=black] (1,2.25) circle (0.05cm);
			
	\end{tikzpicture}
	
	}
    \caption{A pair of cooperative robots defined by circles avoiding colliding with each other} 
	\label{fig:2RobClsSchematicHO}
\end{figure}

\begin{remark}\label{FreeSpaceRemark}
\textbf{Free Space Needed}: The result \eqref{lFlimExp} and \eqref{RTurnExp} can be used to determine the space that is needed by a cooperative robot to avoid collisions with other robots/obstacles in its vicinity. Since a robot turns away from obstacles at its limit $F_{\lim}$, then it traces a circle of radius $R_{\text{turn}}$; this can happen if the obstacles are at distances that satisfy \eqref{lFlimExp}. Consider a scenario where a cooperative robot is at the open side of an enclosure closed on 3-sides. In this scenario, since the direction of turn of the robot is fixed, the free space that will be needed by the robot to avoid colliding with any edge is the area traced by a semi-circle of radius $\left(R_{\text{turn}}+R_{\text{Rob}}\right)$.
\end{remark}

\subsection{Collision Avoidance between Multiple Robots}\label{Sec:MultRob}

Proof of collision avoidance using the vortex repulsive field can be directly extended when multiple robots are involved. Collision amongst multiple robots can occur when the condition to activate the repulsive field, \eqref{URepEq}, is true for several pairs of robots. Now, the closed-loop dynamics of pairwise relative velocities are examined and the proof of Theorem~\ref{CoopRobClsAvdThm} is applied to show that for each pair, collision is avoided. Note that when $N>2$ robots are involved, the number of pairwise relative velocities, $V_{r}$ and $V_{\theta}$, are $N(N-1)/2$, each.

For simplicity, let the robots be defined as points. 
Now, the inputs for collision avoidance for each robot $R_i, \ i=1,\cdots,N$, are given by
\begin{subequations}\label{SumFxyRep}
\begin{align}
    F_{xRi\text{Rep}} &= -\lambda\sum\limits_{j|\cos{\gamma_{ij}}<0}{\frac{ V_{rij}}{V_{relij}r_{ij}^2}\left(2V_{\theta ij}\cos{\theta_{ij}}-V_{rij}\sin{\theta_{ij}}\right)} \label{FxRepSum}\\
    F_{yRi\text{Rep}} &= -\lambda\sum\limits_{j|\cos{\gamma_{ij}}<0}{\frac{ V_{rij}}{V_{relij}r_{ij}^2}\left(2V_{\theta ij}\sin{\theta_{ij}}+V_{rij}\cos{\theta_{ij}}\right)}, \label{FyRepSum}
\end{align}
\end{subequations}
where, $j=1\cdots N, \ i\neq j$; for the pair of robots $R_i$ and $R_j$, $r_{ij}$ denotes the separation distance between them; $V_{rij}$ and $V_{\theta ij}$ are the relative velocities; $\theta_{ij}$ is the LOS angle; and $V_{relij}$ is the relative speed, given by \eqref{cosgammaVrRel}. The following identities hold for a pair of robots $R_i$ and $R_j$, for which the triggering condition, $\cos{\gamma_{ij}}<0$, is true:
\begin{align}\label{MultrobRels}
     F_{xRij\text{Rep}} &= -F_{xRij\text{Rep}}, \ F_{yRij\text{Rep}} = -F_{yRij\text{Rep}}, \nonumber \\
     \theta_{ji} &= \pi + \theta_{ij}, \ V_{rij} = V_{rji}, \ V_{\theta ij} = V_{\theta ji}.
\end{align}

In the same spirit as Theorem~\ref{CoopRobClsAvdThm}, collision avoidance between multiple point robots can be proved by defining the Lyapunov function
\begin{equation}\label{LyapMultRob}
    \mathcal{V}_{LRepMult}= \sum\limits_{i,j|\cos{\gamma_{ij}}<0}{\left(r_{ij} + \frac{1}{2}\left(V_{\theta ij}^2 + V_{r ij}^2\right)\right)}
\end{equation}
and evaluating its derivatives along the trajectories of the relative accelerations and that of the separation distance, given by $\dot{r}_{ij}=V_{rij}$. Using \eqref{MultrobRels} and after simplification, it can be shown that
\begin{equation}\label{LyapFnRepDotMR}
    \dot{\mathcal{V}}_{LRepMult} = -\sum{|V_{rij}|\left(1 + \frac{3N'\lambda}{V_{relij}r_{ij}^2}V_{rij}V_{\theta ij}\right)},
\end{equation}
where, $N'$ is the number of cooperative robots that are performing the collision avoidance maneuvers. Now, using the same arguments as used in proving Theorem~\ref{CoopRobClsAvdThm}, it can be shown that the equilibrium conditions $r_{ij}=V_{rij}=V_{\theta ij}=0$ are unstable for any pair of robots. Thus, collision is avoided between any pair of robots that are on a collision course.

\textbf{Remarks}: For robots bounded by circles and with bounded inputs, the results of Sec.~\ref{Sec:NonZeroRob} hold in the multiple robots case as well. The coefficient $3N'$, in \eqref{LyapFnRepDotMR}, appears when a pair of cooperative robots are avoiding each other, $N'=2$ in \eqref{LyapFnRep2CODot}, as well as when a single cooperative robot is maneuvering to avoid colliding with a non-cooperative robot, $N'=1$ in \eqref{LyapFnDotCONonCO}.

\subsection{Implementation}

As suggested in \cite{DeLuca1994}, with the use of potential fields, a robot can avoid collisions and navigate to its goal if its heading angle, $\phi_{Ri}$, is controlled to achieve the desired heading angle, $\phi_{Ri\text{Des}}$, given by
\begin{equation}\label{phiDes}
    \phi_{Ri\text{Des}} = \tan^{-1}{\left(\frac{F_{yRi\text{Att}}+\sum\limits_{j}{F_{yRij\text{Rep}}}}{F_{xRi\text{Att}}+\sum\limits_{j}{F_{xRij\text{Rep}}}}\right)}.
\end{equation}
In the context of this paper, $F_{xRi\text{Att}}$ and $F_{yRi\text{Att}}$ are given by \eqref{FxyAtt}, while $F_{xRi\text{Rep}}$ and $F_{yRi\text{Rep}}$ are calculated using \eqref{GradURep} and \eqref{GradURepF}.

To ensure $\phi_{Ri}\to\phi_{Ri\text{Des}}$, the angular speed, $\omega_{Ri}$ in \eqref{phiDyn}, can be designed as the output of the controller
\begin{equation}\label{omegaRiCL}
    \omega_{Ri}=K_p\left(\phi_{Ri\text{Des}}-\phi_{Ri}\right),
\end{equation}
where, $K_p>0$ is the control gain. This control law can also be chosen as a Proportional-Integral controller so that bounded disturbances acting on the robot can be suppressed.

In this section, it is shown that the implementation, \eqref{phiDes} and \eqref{omegaRiCL}, leads to collision avoidance between robots as well as navigation towards the goal location. For ease of understanding, these cases are considered separately. First, the case of navigating towards the goal is considered, that is, in \eqref{phiDes}, the repulsive field components are set to zero. Thus, as discussed in Sec.~\ref{Sec:AttPF}, from the expressions for $F_{yRi\text{Att}}$ and $F_{xRi\text{Att}}$, as given by \eqref{FxyAtt}, the desired heading angle for robot $R_i$ becomes
\begin{align}\label{phiDesAttExp}
    \phi_{Ri\text{Des}} &= \tan^{-1}{\left(\frac{\kappa \sin{\left(\theta_i\right)}}{\kappa \cos{\left(\theta_i\right)}}\right)} \nonumber
    \intertext{implying}
    \phi_{Ri\text{Des}} &= \theta_i \ \text{or} \ \phi_{Ri\text{Des}} = \theta_i + \pi.
\end{align}
Since it has been shown in Sec.~\ref{Sec:AttPF} that the relative velocity states $V_{\theta}=0$ and $V_{r}<0$ are stable equilibrium states, the valid solution for the heading angle is $\phi_{Ri\text{Des}} = \theta_i$. 

Now, consider the case of collision avoidance between a pair of cooperative robots $R_i$ and $R_j$. using the vortex repulsive field. By setting $F_{yRi\text{Att}}=0$ and $F_{xRi\text{Att}}=0$ in \eqref{phiDes} and from \eqref{GradURep} and \eqref{GradURepF}, the desired heading angle for $R_i$ becomes
\begin{equation}\label{phiDesRepExp}
    \phi_{Ri\text{Des}} = \tan^{-1}{\left(\frac{2V_{\theta}\sin{\theta_i}+V_{r}\cos{\theta_i}}{2V_{\theta}\cos{\theta_i}-V_{r}\sin{\theta_i}}\right)}.
\end{equation}
The desired heading angle for $R_j$ can be similarly derived. Using the result $\theta_j=\theta_i+\pi$, it can be shown that
\begin{subequations}\label{PhiDesijRepExp2}
    \begin{align}
        \tan{\phi_{Ri\text{Des}}} &= \tan{\phi_{Rj\text{Des}}}, \label{PhiDesjRepExpf}
        \intertext{implying,}
        \phi_{Ri\text{Des}} &= \phi_{Rj\text{Des}} \ \text{or}  \label{PhiDesjRepExpf1} \\
        \phi_{Ri\text{Des}} &= \pi + \phi_{Rj\text{Des}}. \label{PhiDesjRepExpf2}
    \end{align}
\end{subequations}
Since the repulsive field is triggered only when robots are on a collision path, their heading angles cannot be the same, thus the solution \eqref{PhiDesjRepExpf1} does not hold. As a result, the heading angles of the two cooperative robots satisfy $\phi_{Ri\text{Des}} = \pi + \phi_{Rj\text{Des}}$, implying that while they are on a collision course, they turn away from each other. This result also matches with that derived in Sec.~\ref{Sec:NonZeroRob}. 

When only robot $R_i$ is cooperative and robot $R_j$ is either non-cooperative or attacking, a similar analysis can be performed to find a relation between $\phi_{Ri\text{Des}}$ and the heading angle of robot $R_j$. When $R_j$ is non-cooperative, $\phi_{Rj\text{Des}}=\phi_{Rj} \ \forall \ t\geq0$, since $R_j$ does not change its path. When $R_j$ is attacking, $\phi_{Rj\text{Des}}=\theta_{i}$. As has been proved in the earlier sections, collision is avoided between the cooperative robot and both non-cooperative or attacking types of robots.

Note that while calculating $\phi_{Ri\text{Des}}$, either in \eqref{phiDesRepExp} or in \eqref{phiDesAttExp}, the PF parameters $\kappa$ and $\lambda$ do not appear, indicating that these parameters do not influence the desired heading angle. However, from the closed-loop dynamics of the relative velocities $V_r$ and $V_{\theta}$, it can be seen that the PF parameters affect the magnitudes of these relative velocities, which in turn, determine $\phi_{Ri\text{Des}}$.

These theoretical results are supported by experimental, which are presented next.

\section{Experimental Results}\label{Sec:ExpRes}

The dynamic vortex PF algorithm is implemented on the QBOT 2E mobile robot platform by Quanser\footnote{\url{https://www.quanser.com/products/qbot-2e/}}. All experiments are conducted in the Autonomous Vehicles Research Studio\footnote{\url{https://www.quanser.com/products/autonomous-vehicles-research-studio/}}, also from Quanser. Some of the key features of the robot and the Studio are listed in Table~\ref{tab:Qbot2EParams}.

\begin{table}[htpb!]  
	\centering
	\caption{Key parameters of the experimental platform}
  \label{tab:Qbot2EParams}
    \begin{tabular}{c|c}
			\toprule
			\multicolumn{2}{c}{QBOT 2E mobile robot} \\ \midrule
			$2R_{\text{Rob}}$ & 35 cm \\
			$V_{\max}$ & 0.7 m/s \\
			Chosen robot speed $V$ & 0.17 m/s \\
			On-board computer & Raspberry Pi with integrated WiFi \\  \midrule
			\multicolumn{2}{c}{Studio} \\ \midrule
			Workspace volume & $3.5\times3.5\times2$ (in m) \\
			Motion capture & Optitrack Flex 13 localization cameras \\
			\bottomrule     
		\end{tabular}  
\end{table}

Within the Studio, all robots are localised in the same coordinate frame. The position, velocity, and orientation of each robot are determined using the motion capture system and transmitted by WiFi to a central computer, where the collision avoidance algorithm is implemented. The angular speed, $\omega$, which is given by the control law \eqref{omegaRiCL}, of the robot and its constant linear speed, $V$, are obtained by driving the wheels of the robot independently. These states are related by the kinematic relations
\begin{equation}\label{VomegaCal}
    V = \frac{v_R+v_L}{2}, \ \omega = \frac{v_R-v_L}{d}, \ v_{R,L}=\omega_{R,L}r_w,
\end{equation}
where, $v_{R,L}$ are the linear speeds of the right and left wheels; $d$ is the distance between the wheels; and $r_w$ is the radius of each wheel.

In all the experimental results, once the robots reach within 20 cm of their goal locations, they stop. This setting has support in Remark~\ref{AttProofRemark}, since the speeds of the robots are kept constant throughout. Also, for all cases, the paths of the robots in the $X-Y$ space, the trajectories of the relative velocities in the $V_r-V_{\theta}$ space, and the evolution of the separation distance, $r$, are shown.

\subsection{Case 1 - Results}\label{Sec:Case1Res}

Two cooperative robots, $R_{1,2}$, are initially at rest at positions $(x,y)_{R10}$ and $(x,y)_{R20}$, respectively. For each robot, its goal locations is selected as the initial position of the other. They are also oriented such that, if they move at a constant velocity, they are on a head-on collision course. The PF parameters were set at $\lambda=10$ and $\kappa=10$.

\begin{figure}[!htpb]
	\centering
	\includegraphics[width=0.95\columnwidth]{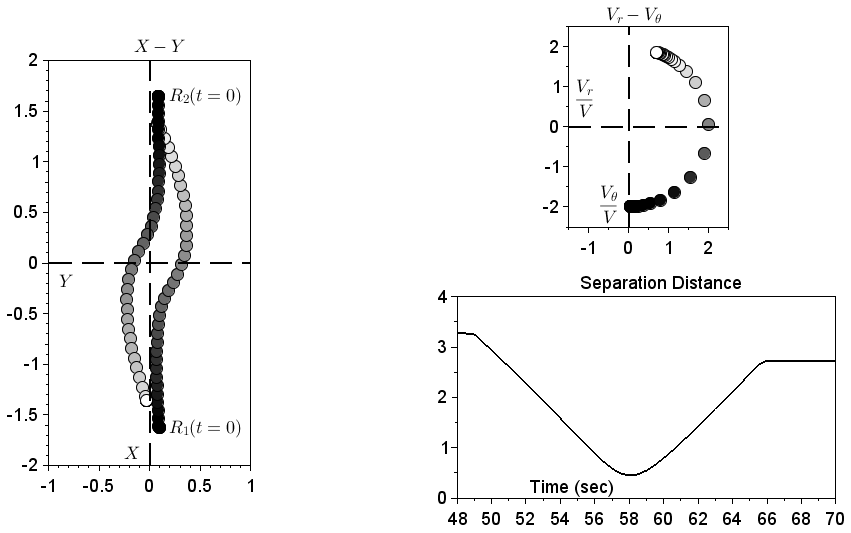}
	\caption{Collision avoidance by a pair of cooperative robots initially on a head-on collision course. As time progresses, the shade of the circles becomes lighter; this visualisation idea is borrowed from \cite{vandenBerg2011}.}
	\label{fig:ClsAvdCoopPair}
\end{figure}

The reciprocal nature of collision avoidance is evident from the paths traced by the robots in the $X-Y$ space (depicted as shaded circles) shown in Fig.~\ref{fig:ClsAvdCoopPair}. Each robot turns to its right in order to avoid collisions; this direction of turn is decided by the choice of the signs of the gradients that define the inputs. As can also be seen in the trajectory of the relative velocities in the $V_r-V_{\theta}$ space, there are no stable equilibria in the $\{V_r<0 \ \text{and} \ V_{\theta}=0\}$ domain, thus ensuring that the conditions for collision are never satisfied. In addition, the robots do not display any oscillatory behaviour as well; this can also be observed in the results that follow. 

To generate the graph in the $V_r-V_{\theta}$ space, the values of these states are divided by the set-speed $V$. Moreover, once the condition $V_r=0$ holds, the triggering condition becomes invalid and the robots deactivate the repulsive fields. They then move to their respective goal locations based on the inputs from the Attractive PF. Since the robots have non-zero radii, the parameter $\lambda$ is tuned during experimentation, so that they avoid collisions. This is equivalent to changing the parameter $r^*$ in the expression of the inputs given by \eqref{FxySatFn}. Setting $\lambda=5$ led to the robots colliding with each other, while for $\lambda\geq10$, they were able to maintain non-zero separation distance between themselves. Thus, for larger $\lambda$, the robots could turn away faster from each other. The expressions \eqref{FxySatFn} and \eqref{lFlimExpNonCoop} support this choice. 

\begin{figure}[!htpb]
	\centering
	\includegraphics[width=0.95\columnwidth]{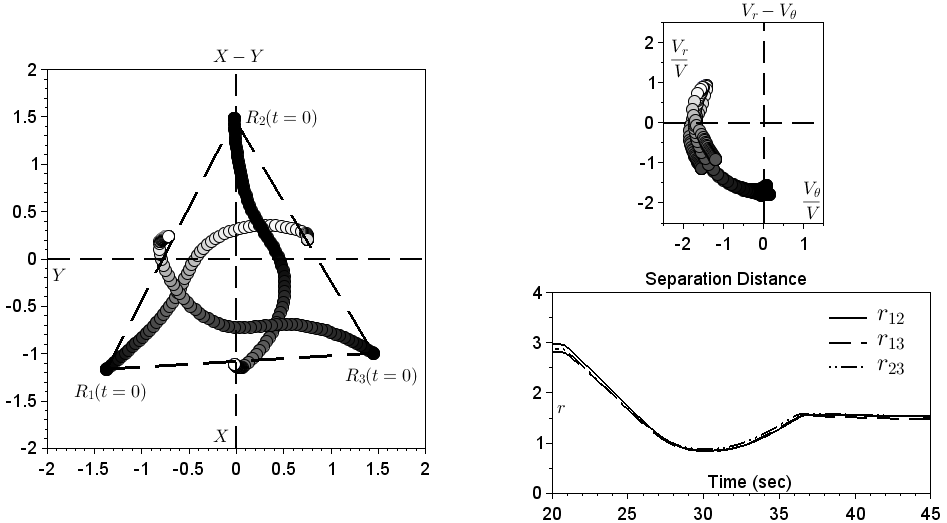}
	\caption{Collision avoidance by three cooperative robots all of which are initially on a collision course.}
	\label{fig:ClsAvdCoopThree}
\end{figure}

The case of 3 cooperative robots is presented next. Each robot is initially located at the vertex of an (almost) equilateral triangle; their goal locations are the mid-points of the sides opposite to their respective vertices. The results are shown in Fig.~\ref{fig:ClsAvdCoopThree} and support the theoretical discussion presented in Sec.~\ref{Sec:MultRob}. The aspect that stands out in this case is the behaviour of the three robots, which follows the ``symmetric roundabout'' behaviour as described in \cite{Kosecka1997} and which can be seen in the figure on the left in Fig.~\ref{fig:ClsAvdCoopThree}. As can be seen, the three robots begin to trace a circle in the clockwise direction leading to this behaviour. This behaviour is visualised in Fig.~\ref{fig:2RobClsSchematicHO}; since this behaviour occurs for each pair, the resulting motion in $X-Y$ space appears as though the robots are following a roundabout.

In fact, the robots considered pair-wise follow the discussion presented in Sec.~\ref{Sec:NonZeroRob}. Each robot begins to trace a circle passing through the vertex of the triangle in the clockwise direction. Thus, if the initial separation and PF parameters are chosen according to the results in Sec.~\ref{Sec:NonZeroRob}, then none of these circles intersects with any other, leading to collision avoidance amongst all robots. It is emphasised that no rules are stated that define the direction of turn of each robot, but appear naturally based on the choices of the signs of the gradients, defined in \eqref{GradURepF}.

\subsection{Case 2 - Results}

The results of a cooperative robot, $R_1$, avoiding a collision with a non-cooperative robot, $R_2$, are shown in Fig.~\ref{fig:ClsAvdNonCoop}. The robots are initially on a head-on collision course. As can be seen, the robot $R_2$ moves towards its goal, which is the initial position of $R_1$, at a constant velocity and does not maneuver even though $R_1$ is on its path. $R_1$, on the other hand, projects the dynamic vortex PF around $R_2$ and maneuvers around it, since the triggering condition becomes true. As can be seen in Fig.~\ref{fig:ClsAvdNonCoop}, the equilibrium conditions, identified in Theorem~\ref{NonCoopRobClsAvdThm}, are not stable.

\begin{figure}[!htpb]
	\centering
	\includegraphics[width=0.95\columnwidth]{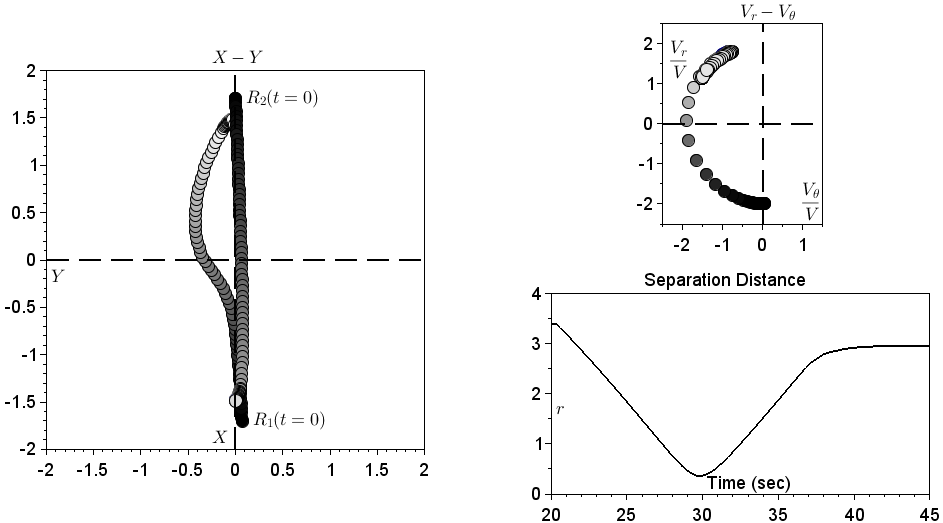}
	\caption{A cooperative robot avoiding collision with a non-cooperative robot.}
	\label{fig:ClsAvdNonCoop}
\end{figure}

\subsection{Case 3 - Results}

The results of a cooperative robot, $R_1$, avoiding a collision with an attacking robot, $R_2$, are shown in Fig.~\ref{fig:ClsAvdAttack}. Even here, the robots are initially on a head-on collision course. The PF parameters are selected such the conditions - identified in Theorem~\ref{AttackRobClsAvdThm} - for $R_1$ to avoid colliding with $R_2$ hold. As can be seen, $R_2$ maneuvers and begins to ``chase'' $R_1$, while $R_1$ has already changed its heading angle in order to avoid colliding with $R_2$. In this experiment, once $R_1$ has reached the vicinity of its goal, it stops moving and only then does $R_2$ collide with it. During experimentation, it was observed that a lower value of the repulsive PF parameter, $\lambda$, resulted in $R_1$ making several ($>2$) circular turns before it reached its goal; however, at no instant was $R_2$ able to collide with $R_1$. This behaviour is equivalent to the oscillatory behaviour recognised in Sec.~\ref{Case3Sec}. 

\begin{figure}[!htpb]
	\centering
	\includegraphics[width=0.95\columnwidth]{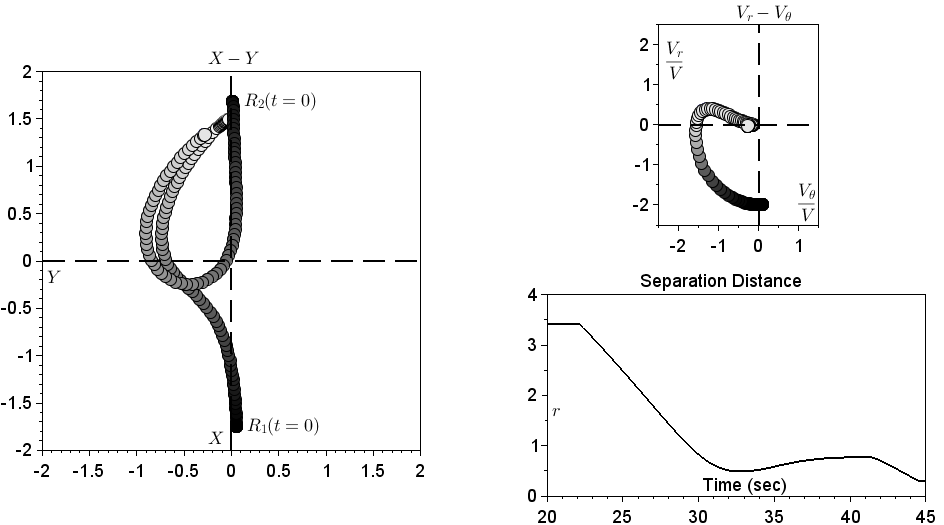}
	\caption{A cooperative robot avoiding collision with an attacking robot.}
	\label{fig:ClsAvdAttack}
\end{figure}

\section{Comparison with a non-vortex PF}

Simulation results and a brief analysis are presented when a pair of cooperative robots apply inputs given by the non-vortex PF, \eqref{GradURepFNonV}. Analogous to the results in Sec.~\ref{Case1Sec}, analysis for this case does not consider the attractive PF. Thus, without the vortex field, from \eqref{VrVthDotExp}, in the closed-loop, the relative accelerations become 
\begin{subequations}\label{VrVthNonVortCL}
    \begin{align}
		\dot{V}_{r} &= \frac{V_{\theta}^2}{r} - 2\left(\frac{\lambda}{V_{rel}r^2}\right)V_{r}^2 \label{VriDotExpCL4} \\
		\dot{V}_{\theta} &=  \frac{-V_{r}V_{\theta}}{r} +4\left(\frac{\lambda}{V_{rel}r^2} \right)V_{r}V_{\theta}.\label{VthiDotExpCL4}
    \end{align}
\end{subequations}
For these dynamics, the equilibrium conditions are the pairs $V_r=V_{\theta}=0$ and $V_r=-2V,V_{\theta}=0$. The latter is an equilibrium condition since the relative velocities satisfy \eqref{VrVthVrelCirc}, thus, even if the magnitude of $\dot{V}_r$ is high, the relative velocity $|V_r|\leq 2V$. The existence of this additional equilibrium point is also different from the closed-loop dynamics \eqref{VrVthDotExpCL}, which results from the use of the vortex PF. 

Lyapunov analysis is performed to analyse the stability of this equilibrium point, since it is the conditions $V_r<0,V_{\theta}=0$ that lead to collision. Thus, if this equilibrium point is unstable, then collision is avoided. From the Lyapunov function
\begin{equation}\label{LyapFnRepNV}
    \mathcal{V}_{LNV}= r + \frac{1}{2}V_{\theta}^2 + \frac{1}{2}\left(V_r+2V\right)^2
\end{equation}
and its derivative
\begin{align}\label{LyapFnRep2CODot}
	\dot{\mathcal{V}}_{LNV} &= V_r\left(1 + \alpha\left(V_{\theta}^2-V_{r}^2\right)\right) + 2V\dot{V}_r \nonumber\\
	                         &= V_r\left(1 + \alpha V^2\right)-2\alpha\left(V_r+2V\right)V_r^2 + 2V\frac{V_{\theta}^2}{r}.
\end{align}
where, $\alpha=\frac{2\lambda}{V_{rel}r^2}>0$. Now, if the robots are initially on a head-on collision course, that is, $0<V_r(t=0)\leq-2V,V_{\theta}(t=0)=0$, then, from \eqref{VthiDotExpCL4}, it can be seen that $\dot{V}_{\theta}=0 \ \forall \ t\geq0$, implying at the robots do not turn away from each other. Thus, for these initial conditions, which are also the equilibrium conditions, the relative velocity $V_r\to-2V$, implying that the derivative $\dot{\mathcal{V}}_{LNV}=V_r\left(1 + \alpha V^2\right)<0$, and in turn, that the equilibrium point is asymptotically stable when the collision conditions are satisfied. Thus, the robots collide with each other. This result is in direct contrast with the use of the vortex PF, where, the closed-loop dynamics have a single equilibrium point $V_r=V_{\theta}=0$ and the inputs derived from the vortex PF ensure that the robots turn away from each other rendering the equilibrium point to be unstable.

Simulation results showing the trajectories of the robots in the $X-Y$ space, with and without the vortex field, are shown in Fig.~\ref{fig:VNoVPFFig}. In each case, the robots are initially on a head-on course with the target of one robot set as the initial position of the other. As is evident, without the vortex field, the robots do not turn and eventually collide with each other, near the origin; for the results in Fig.~\ref{fig:VNoVPFFig}, the simulation was not stopped though the robots collide. The result with the vortex field is the same as discussed in the earlier sections: the robots turn away from each other and then move to their goal locations. 

\begin{figure}[!htpb]
	\centering
	\includegraphics[width=0.8\columnwidth]{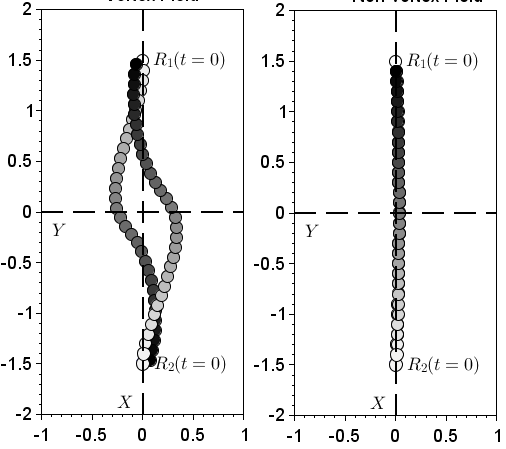}
	\caption{Trajectories of the cooperative robots in $X-Y$ space with (left) and without (right) the vortex repulsive PF.}
	\label{fig:VNoVPFFig}
\end{figure}

\section{Conclusions}\label{Sec:Conc}

In this article, we presented a dynamic vortex PF algorithm for collision avoidance between different types of non-holonomic robots and multiple robots as well. The algorithm is able to ensure that there are no local minima in the relative velocity space when a cooperative robot is on a collision path with other cooperative robots, non-cooperative robots, as well as an attacking robot. For the last case, conditions on the PF parameters were identified such that the attacking robot is never able to collide with a cooperative robot. The nature of the vortex PF guarantees that robots turn in the same direction while avoiding collisions with one another, thus lending the robots a reciprocal behaviour. With the identification of the free space that is needed by a robot with non-zero radius and bounded inputs to avoid collisions, the algorithm can be extended to robots navigating spaces such as corridors or paths where the robot maybe forced to turn in one direction. In such cases, the problem of how to select the direction of rotation of the robot would need to be addressed. This article also considered robots to move at constant linear speeds, thus the algorithm can be extended when this assumption does not hold. 

\section*{Acknowledgments}
This work was supported by Karnataka Innovation \& Technology Society, Dept. of IT, BT and S\&T, Govt. of Karnataka vide GO No. ITD 76 ADM 2017, Bengaluru; Dated 28.02.2018.

\bibliographystyle{unsrt}  
\bibliography{VortexPF_ICARA23}  

\end{document}